\def\eqref#1{equation~\ref{#1}}
\def\1{\bm{1}}
\DeclareMathAlphabet{\mathsfit}{\encodingdefault}{\sfdefault}{m}{sl}
\SetMathAlphabet{\mathsfit}{bold}{\encodingdefault}{\sfdefault}{bx}{n}
\title{Controllable Generation via\\ Locally Constrained Resampling}
\author{Kareem Ahmed, Kai-Wei Chang \& Guy Van den Broeck\\
Department of Computer Science\\
University of California, Los Angeles\\
\texttt{\{ahmedk, kwchang, guyvdb}\}@cs.ucla.edu \\
}
\theoremstyle{definition}
\definecolor{sanae0}{rgb}{0.9312692223325372, 0.8201921796082118, 0.7971480974663592}
\definecolor{sanae1}{rgb}{0.8559578605899612, 0.6418993116910497, 0.6754191211563135}
\definecolor{sanae2}{rgb}{0.739734329496642, 0.4765280683170713, 0.5959617419736206}
\definecolor{sanae3}{rgb}{0.57916573903086, 0.33934576125314425, 0.5219003947563425}
\definecolor{sanae4}{rgb}{0.37894937987024996, 0.2224702044652721, 0.41140014301575434}
\definecolor{sanae5}{rgb}{0.1750865648952205, 0.11840023306916837, 0.24215989137836502}
\definecolor{pastel-red}{HTML}{FF6961}
\definecolor{spearmint}{HTML}{4EB6B0}
\definecolor{teal}{HTML}{12486B}
\definecolor{ourspecialtextcolor}{RGB}{115,119,123}
\tikzset{tips=proper,edge/.style = {->,>=latex'},lbl/.style={draw=none,font={\footnotesize\ttfamily}}}
\pgfplotsset{compat=newest}
\newcommand{\implws}{\textvisiblespace}
\newcounter{sarrow}
\definecolor{palette-orange}{HTML}{ff3a20}
\definecolor{palette-green}{HTML}{5b8c5a}
\definecolor{palette-blue}{HTML}{0e79b2}
\definecolor{palette-yellow}{HTML}{f5b700}
\definecolor{palette-green}{HTML}{1e2f23}
\definecolor{palette-purple}{HTML}{331832}
\definecolor{dark gray}{HTML}{808080}
\definecolor{darker gray}{HTML}{606060}
\definecolor{palette-alt-green}{HTML}{084c61}
\definecolor{palette-alt-pink}{HTML}{d11149}
\definecolor{palette-alt-yellow}{HTML}{e3b505}
\definecolor{palette-alt-purple}{HTML}{331832}
\definecolor{palette-alt-blue}{HTML}{01baef}
\definecolor{palette-alt-gray}{HTML}{808f85}
\definecolor{sanae5}{rgb}{0.1750865648952205, 0.11840023306916837, 0.24215989137836502}
\definecolor{sanae1}{rgb}{0.8559578605899612, 0.6418993116910497, 0.6754191211563135}
\definecolor{pastel-red}{HTML}{FF6961}
\definecolor{red-red}{HTML}{DB1F48}
\definecolor{pastel-red}{HTML}{FF6961}
\definecolor{spearmint}{HTML}{4EB6B0}
\pgfplotsset{select coords between index/.style 2 args={
    x filter/.code={
        \ifnum\coordindex<#1\fi
        \ifnum\coordindex>#2\fi
    }
}}
\pgfplotsset{compat=1.17}
\begin{document}

\maketitle

\begin{abstract}
Autoregressive models have demonstrated an unprecedented ability at modeling the intricacies
of natural language.
However, they continue to struggle with generating complex outputs that adhere to
logical constraints.
Sampling from a \emph{fully-independent} distribution subject to a constraint is hard.
Sampling from an \emph{autoregressive distribution} subject to a constraint is doubly
hard: We have to contend not only with the hardness of the constraint but also the 
distribution's lack of structure.
We propose a tractable probabilistic approach that performs Bayesian conditioning to
draw samples subject to a constraint.
Our approach considers the entire sequence, leading to a more globally optimal constrained generation than current greedy methods.
Starting from a model sample, we induce a local, factorized distribution which we can
tractably condition on the constraint.
To generate samples that satisfy the constraint, we sample from the conditional distribution,
correct for biases in the samples and resample. 
The resulting samples closely approximate the target distribution and are guaranteed to satisfy
the constraints.
We evaluate our approach on several tasks, including LLM detoxification and solving Sudoku puzzles.
We show that by disallowing a list of toxic expressions our approach is able to steer the model's outputs away from toxic generations, outperforming similar approaches to detoxification.
We conclude by showing that our approach achieves a perfect accuracy on Sudoku compared to $<50\%$ for GPT4-o and Gemini 1.5.
\end{abstract}

\section{Introduction}
The advent of large language models (LLMs) has brought about a paradigm shift towards
generating sequences of tokens that jointly constitute the desired output.
Such multi-token outputs exhibit an amount of structure to them: in free-form
generation, the model is expected to generate coherent paragraphs; in question
answering, it is expected to provide answers to the posed questions;
and in summarization, it is expected to condense lengthy documents into concise
summaries.
And while current LLMs are remarkably apt at generating fluent sentences, there
is a need for generations that go beyond that, exhibiting more intricate structure~\citep{Liu2024Structured}.
Such structure includes, \eg API calls and code snippets~\citep{wang2023grammar}, JSON schemas~\citep{openai2023structured}, logical puzzles~\citep{mittal2024puzzlebench, pan2023logiclm}, all of which LLMs struggle with~\citep{sun2023evaluatingLLMs}.

Consequently, several approaches to constraining LLMs were developed, all
bolstering a similar underlying idea: at every generation step \emph{greedily}
mask the LLM outputs that could lead to the constraint being violated.
That is, the ``defacto'' recipe~\citep{deutsch2019general, guidance,outlines,koo2024} for applying constraints to LLMs consists of the following:
\begin{enumerate}
    \item Based on the current state of the constraint, build a mask of valid next tokens.
    \item Mask out logits for invalid tokens, normalize, and sample.
    \item Based on the sampled token, update the constraint state for the next time step.
\end{enumerate}

The above recipe is limited in a number of ways.
First, the masking process is \emph{myopic}~\citep{shih2023longhorizon}, as the constraint is enforced \emph{greedily}
per-token rather than jointly across the entire generation.
This is as opposed to \emph{Bayesian} conditioning,
where we consider the entire sequence.
Second, the constraint specification language is typically regular expressions,
which can be significantly more verbose than their target compilation forms,
deterministic finite automata (DFAs)~\citep{gruber2009, gruber2014, koo2024}.
Lastly, there exists classes of constraint functions that can only be described by
DFAs whose size grows exponentially in the size of their constraint.

In this work we develop an approach that departs from the previously established
recipe for constraining LLMs, tackling all the aforementioned shortcomings in the
process.
Our approach starts with the observation that \emph{an LLM sample induces a local,
factorized distribution} $\Tilde{\p}$.
We use a tractable compilation form, \emph{constraint circuits}~\citep{darwiche11},
that subsume DFAs on bounded-length strings while being more expressive 
efficient~\citep{choi2020pc}.
That is, there are classes of functions that we can represent using constraint
circuits that we could not otherwise as efficiently represent using 
DFAs~\citep{SDDexpsucc}.
Such constraint circuits can be specified as Boolean python functions, alleviating the
need for writing regular expressions or other domain specific languages.%
We show that we can leverage logical circuits to tractably condition $\Tilde{p}$,
drawing samples that are biased yet provably satisfy the constraint.
Sampling from the LLM subject to a constraint $\alpha$ then entails conditioning $\Tilde{p}$
on $\alpha$, drawing biased samples from $\Tilde{p}(\cdot \mid \alpha)$, which we debias by reweighing
them proportionally to their probability under the LLM and resampling.
The returned samples are distributed according to the conditional LLM distribution while also satisfying the constraint. 

We start our evaluation by testing our approach on the toy task of predicting shortest paths
under an autoregressive model, and observe a significant improvement upon the 
baseline performance.
Next, we evaluate our approach on the task of LLM detoxification where we show
that, by virtue of its probabilistic nature, by simply disallowing a list of toxic expressions, our approach is able to steer the model away from toxic generations, outperforming previous approaches to detoxification.
Lastly, we show that our approach achieves a perfect accuracy on
Sudoku puzzles, compared to an almost $26\%$ and $45\%$ accuracy achieved by Gemini 1.5 Flash and GPT4-o models, respectively.

\begin{figure*}[!t]
 \centering
\hspace{-35pt}
\begin{subfigure}[t]{0.30\textwidth}
 \centering
 \scalebox{.8}{
    \begin{tikzpicture}
    \begin{axis}[
        const plot,
        ymode=log,
        width=1.6\columnwidth,
        height=6cm,
        ymax=1,ymin=1e-5,
        xmin=0,xmax=48.8,
        every axis y label/.style={at=(current axis.above origin),anchor=south},
        every axis x label/.style={at=(current axis.right of origin),anchor=west},
        legend style={
            at={(0.95, 0.95)},
            anchor=north east,
        },
        legend cell align={left},
        log origin=infty,
        ticks=none,
        xlabel=$y$, 
        ylabel=$p(y|x)$,
        axis lines*=left,
        clip mode=individual,
    ]
    \addplot[forget plot,draw,mark=none,name path=zero] {1e-5};
    \addplot[name path=med,draw=none,mark=none,fill=pastel-red] table [x=x, y=y] from \wrong;

    \addplot[forget plot,name path=pr, draw=none, very thick, fill=palette-blue!20!white] table [x=t,y=prob,palette-blue] from \constraint \closedcycle;
    \addplot[forget plot,name path=pr, very thick, sanae5] table [x=t,y=prob,palette-blue] from \data;

    \node[palette-green] (canon) at (23, 7e-1) {\small\texttt{[He's, \implws full, \implws of, \implws sh!t]}};
    \node[palette-green] (llmdist) at (43, 1e-1) {\scriptsize\text{{\fontfamily{cmtt}\selectfont llm dist.}}};
    \node[palette-green] (datshadow) at (48, 0.35e-1) {};
    \draw[->,>=latex',thick,black] (49.5, 1e-1) edge[bend left=49] (datshadow);
    \draw[-stealth,decorate,decoration={snake,amplitude=2pt,pre length=1pt,post length=2pt}, thick] (20.5,0.04) --(20.5, 5e-1);
    \draw [yshift=-0.3cm, latex-latex, |-|](axis cs:22,1e-5) -- node [fill=white] {$m_{\alpha}$} (axis cs:40,1e-5);
    \end{axis}
\end{tikzpicture}
}
\end{subfigure}%
\hspace{12pt}
\begin{subfigure}[t]{0.30\textwidth}
\centering
\scalebox{.80}{
        \begin{tikzpicture}
    \begin{axis}[
        const plot,
        ymode=log,
        width=1.6\columnwidth,
        height=6cm,
        ymax=1,ymin=1e-5,
        xmin=0,xmax=48.8,
        every axis y label/.style={at=(current axis.above origin),anchor=south},
        every axis x label/.style={at=(current axis.right of origin),anchor=west},
        legend style={
            at={(0.95, 0.95)},
            anchor=north east,
        },
        legend cell align={left},
        log origin=infty,
        ticks=none,
        xlabel=$y$, 
        ylabel=$p(y|x)$,
        axis lines*=left,
        clip mode=individual
    ]
    \addplot[forget plot,draw,mark=none,name path=zero] {1e-5};
    \addplot[forget plot,name path=pr, draw=none, very thick, fill=palette-blue!20!white] table [x=t,y=prob,palette-blue] from \constraintclipped \closedcycle;
    \addplot[forget plot,name path=pr, very thick, sanae5, opacity=0.2] table [x=t,y=prob,palette-blue] from \data;
    \addplot[name path=med,draw=none,mark=none,fill=pastel-red] table [x=x, y=y] from \wrongtwo;
    \addplot[forget plot,name path=pseudo-pr, very thick, gray, opacity=1.0] table [x=t,y=prob,palette-blue] from \dat;

    \node[palette-green] (canon) at (23, 7e-1) {\small\texttt{[He's, \implws full, \implws of, \implws sh!t]}};
    \node[palette-green] (apprxdist) at (36, 1e-1) {\scriptsize\text{{\fontfamily{cmtt}\selectfont approx.\ dist.}}};
    \node[palette-green] (shadowapprx) at (25, 1e-1) {};
     \draw[->,>=latex',thick,black] (30, 1.5e-1) edge[bend right=30] (shadowapprx);
    \draw [yshift=-0.3cm, xshift=-0.1cm, latex-latex, |-|](axis cs:22.5,1e-5) -- node [fill=white] {$m_{\alpha}$} (axis cs:27.5,1e-5);
    \end{axis}
\end{tikzpicture}
}
\end{subfigure}%
\hspace{12pt}
\begin{subfigure}[t]{0.30\textwidth}
\centering
\scalebox{.80}{
\begin{tikzpicture}
    \begin{axis}[
        const plot,
        ymode=log,
        width=1.6\columnwidth,
        height=6cm,
        ymax=1,ymin=1e-5,
        xmin=0,xmax=48.8,
        every axis y label/.style={at=(current axis.above origin),anchor=south},
        every axis x label/.style={at=(current axis.right of origin),anchor=west},
        legend style={
            at={(0.95, 0.95)},
            anchor=north east,
        },
        legend cell align={left},
        log origin=infty,
        ticks=none,
        xlabel=$y$, 
        ylabel=$p(y|x)$,
        axis lines*=left,
        clip mode=individual
    ]
    \addplot[forget plot,draw,mark=none,name path=zero] {1e-5};
    \addplot[forget plot,name path=pr, draw=none, very thick, fill=palette-blue!20!white] table [x=t,y=prob,palette-blue] from \constraintclipped \closedcycle;
    \addplot[forget plot,name path=pr, very thick, sanae5, opacity=0.2] table [x=t,y=prob,palette-blue] from \data;
    \addplot[name path=med,draw=none,mark=none,fill=spearmint] table [x=x, y=y] from \correct;
    \addplot[forget plot,name path=pseudo-pr, very thick, gray, opacity=1.0] table [x=t,y=prob,palette-blue, restrict x to domain=22:25] from \dat;

    \draw [yshift=-0.3cm, xshift=-0.1cm, latex-latex, |-|](axis cs:24,1e-5) -- node [fill=white] {$m_{\alpha}$} (axis cs:26,1e-5);
    \node [fill=white] (condition) at (27.5, 1e-1) {$|\alpha$};
     \node[palette-green] (canon) at (27, 7e-1) {\small\texttt{[He's, \implws full, time, \implws employed]}};
     \node[palette-green] (apprxdist) at (22, 2.5e-1) {\scriptsize\rotatebox{90}{\text{{\fontfamily{cmtt}\selectfont I.W.}}}};
    \draw[-stealth,decorate,decoration={snake,amplitude=2pt,pre length=1pt,post length=2pt}, thick] (23.5,0.15) --(23.5,5e-1);
    \end{axis}
\end{tikzpicture}
}
\end{subfigure}%
\caption{
\textbf{An illustration of our proposed approach.}
(left) An LLM induces a distribution over all possible sentences.
Autoregressively sampling from the LLM distribution, we obtain a
sentence (\protect\tikz[baseline=+0.2ex]{\protect\draw[gray!70!white,fill
=pastel-red] (0,0) rectangle ++(0.5,0.25);}) $\ytilde =$ 
\texttt{[He's, \implws full, \implws of, \implws sh!t]}.
This sentence $\ytilde$ violates a constraint $\alpha$ that disallows
toxic words, including the word ``\texttt{sh!t}''.
The subset of sentences that satisfy the constraint $\alpha$
(\protect\tikz[baseline=+0.2ex]{\protect\draw[gray!70!white,
fill=palette-blue!20] (0,0) rectangle ++(0.5,0.25);}) are
denoted by $\vdash m_\alpha\dashv$.
(center) The sentence $\ytilde$ induces a local, tractable approximation
of the true distribution centered around $\ytilde$.
(right) We can efficiently condition this tractable approximation on
the constraint $\alpha$, trimming away portions of its support that 
do not satisfy the constraint.
Sampling from the LLM distribution subject to the constraint $\alpha$ then corresponds
to sampling from the conditional approximate distribution and adjusting the sample
weights using importance weighting.
This yields a sentence (\protect\tikz[baseline=+0.2ex]
{\protect\draw[gray!70!white, fill=spearmint] (0,0) rectangle ++(0.5,0.25);})
$\y =$ \texttt{[He's, \implws full, time, \implws employed]} that satisfies
$\alpha$.
}
\label{fig:overview}
\end{figure*}
\section{Background}\label{sec:background}

\subsection{Notation and Preliminaries}\label{sec:Notation}
We write uppercase letters ($X$, $Y$) for Boolean variables and lowercase letters ($x$, $y$) for their instantiation ($Y=0$ or $Y=1$).
Sets of variables are written in bold uppercase ($\X$, $\Y$), and their joint instantiation in bold lowercase ($\x$, $\y$).
A literal is a variable ($Y$) or its negation ($\neg Y$).
A logical sentence ($\alpha$ or $\beta$) is constructed from variables and logical connectives ($\land$, $\lor$, $\lnot$,$\implies\!\!$), and is also called a (logical) formula or constraint.
A state or world $\y$ is an instantiation to all variables $\Y$.
A state $\y$ satisfies a constraint $\alpha$, denoted $\y \models \alpha$, if the sentence evaluates to true in that world. A state $\y$ satisfying a constraint $\alpha$ is said to be a model of $\alpha$.
We denote by $m(\alpha)$ the set of $\alpha$'s models.
Throughout this paper, in reference to DFAs, we limit our discussion to those defined on bounded-length inputs, which are equivalent to ordered binary decision diagrams, or OBDDs~\citep{Bryant1992OBDDs}.

\subsection{A Probability Distribution over Possible Sentences}

Let $\alpha$ be a logical constraint defined over Boolean variables $\Y = \{Y_{11},\dots,Y_{nk}\}$, where $n$ denotes the number of time steps in the
sentence, and $k$ denotes the size of the vocabulary, \ie the number of possible
tokens at each time step.
An autoregressive model induces a probability distribution $\p(\cdot)$ over
all possible sentences $\y$.
At every time step~$i$, the autoregressive model ensures that exactly one
token is predicted; \ie exactly one Boolean variable $\{Y_{i1},\dots,Y_{ik}\}$
can be set to true for each time step $i$.
We will write $\y_i$ to denote that variable $Y_{ij}$ is set to true in sentence
$\y$.
More precisely, we let $\y_i \in \{0,1\}^{k}$ be the one-hot encoding of $Y_{ij}$
being set to $1$ among $\{Y_{i1},\dots,Y_{ik}\}$.
The probability assigned by the autoregressive model to a sentence $\y$ is then defined as
\begin{equation}\label{eqn:pr_struct}
     \p(\y) = \prod_{i=1}^{n} \p(\y_{i} \mid \y_{<i}),
\end{equation}
where $\y_{<i}$ denotes the sentence prefix, $\y_1, \ldots, \y_{i-1}$.

\subsection{The State of Conditional Autoregressive Sampling}
\label{section:current_state}

Sampling from an autoregressive distribution conditioned on a logical constraint
$\alpha$ constitutes a major challenge: computing the \emph{exact} conditional
distribution $\p(\y \mid \alpha) = \frac{p(\y, \alpha)}{\p(\alpha)}$ is intractable even for the simplest constraints~\citep{Roth93}, \eg asserting that the word ``dog'' appears at the end of the sentence.
Intuitively, conditioning on $\alpha$ requires that we compute the marginal probability
of the constraint $\p(\alpha)$, in turn requiring that we enumerate all sentences ending
with the word ``dog''.

The defacto approach has therefore been to \emph{greedily} constrain the distribution,
at every time step masking out logits that lead to generations that violate the constraint,
followed by re-normalizing the conditional token distribution~\citep{deutsch2019general, guidance,outlines,koo2024}.
Let the conditioning of the constraint $\alpha$ on the prefix $\yprefix$, which we write as $\conditional{\alpha}{\yprefix}$,
be a \emph{subconstraint} defined on $\Y_{i:n}$ that results from setting the variables $\Y_{1:i-1}$ to their values in $\yprefix$.
Semantically, $\conditional{\alpha}{\yprefix}$ denotes the set of $\y_{i:n}$ that, taken together with the prefix $\yprefix$, would satisfy the constraint.
Moreover, let $\beta_i \coloneqq \exists \y_{>i}\,\conditional{\alpha}{\yprefix}$ denote the set of tokens allowed at the $i$-th position such that there exists some completion $\y_{>i}$ of the sentence that satisfies the constraint $\alpha$, given the current prefix $\yprefix$.
Then we can define the above greedy, or \emph{myopic}, distribution as
\begin{align*}
\p^{\text{myopic}}(\y \mid \alpha) 
&\coloneqq \prod_{i=1}^{n} \frac{\p(\y_i, \beta_i \mid \y_{<i})}{\sum_j\p(y_{ij}, \beta_i \mid \yprefix)}
= \prod_{i=1}^{n} \frac{\p(\y_i \mid \yprefix)\liv\y_{i}\models\beta_i\riv}{\sum_j\,\p(y_{ij}\mid \yprefix)\liv y_{ij} \models\beta_i\riv},
\end{align*}
Of note here is that since the approximate distribution is modeling
the joint probability of the sentence and the constraint, in principle, the
logical reasoning is sound, i.e., the constraint is guaranteed to hold. 
Rather, the shortcoming is in the way the \emph{probabilistic reasoning} is performed:
instead of normalizing the joint distribution by the
marginal probability of the constraint, we are performing it token-wise
steering us
towards sampling sequences that are locally likely rather than globally likely.

The second issue lies with the logical constraint along two different axes.
First is the lack of conciseness of the constraint specification language.
The most common language for specifying constraints is regular expressions
which can be significantly more verbose~\citep{gruber2009, gruber2014, koo2024}
compared to the equivalent logical form.
This is due to the inability to reference and reuse sub-expressions without introducing
additional features such as lookaround operations which can cause an exponential blowup
in the size of the target representations~\citep{Regexlookaround}, or backreferences that
allow us to describe non-regular languages at the expense of an exponential runtime due
to backtracking.
Second is the lack of succinctness of the target representation.
All of the current approaches compile the specified regular expressions into DFAs.
DFAs represent Boolean functions by recursively performing Shannon-decomposition on the function:
disjointing the value of the sub-function with the value of the current variable chosen to be
true and false, respectively.
Consequently, for many constraints of interest, the size of DFA can grow prohibitively.
It turns out there there exists another class of target representation that subsume DFAs on
bounded-length strings: at every step in the function decomposition we can branch not only
on the value of a single variable, rather an \emph{entire sentence}~\citep{darwiche11}.
This class of target representations, which we henceforth denote as \emph{constraint circuits}
are not only more succinct than DFAs in practice, but are provably exponentially more succinct:
there are classes of functions with exponentially-sized DFAs but polynomially-sized constraint circuits~\citep{SDDexpsucc}.

\section{Locally Constrained Resampling: A Tale of Two Distributions}

We depart from the established recipe for conditional autoregressive sampling,
providing a treatment of the problem from first principles that tackles all
of the shortcomings detailed above.
The crux of our approach is the idea that we can approximate the intractable
autoregressive distribution $\p(\y)$ using a local, tractable distribution $\Tilde{p}(\y)$.
The distribution $\Tilde{\p}(\y)$ is amenable to exact and efficient 
probabilistic reasoning,
allowing us to efficiently condition on the constraint $\alpha$ and
draw samples from the distribution $\Tilde{\p}(\y \mid \alpha)$.
We can transform the biased samples drawn from $\Tilde{\p}(\y \mid \alpha)$ 
into samples from $\p(\y \mid \alpha)$ by considering the discrepancy between the
probability of the sample under the true distribution and the approximate distribution.
More formally, we wish to draw samples from
\begin{equation}\label{eqn:pr_conditional}
     \p(\y \mid \alpha) = \frac{\p(\y, \alpha)}{\p(\alpha)},
\end{equation}
which is intractable. %
Instead, we design a tractable proposal distribution $\q$ such that
$\q(\y) > 0$ whenever $\p(\y \mid \alpha) > 0$.
We associate with a \emph{constrained sample} $\y$ an \emph{unconstrained sample} $\ytilde$,\ where $\y$ can be understood as  a \emph{projection} of $\ytilde$ onto $\alpha$ \st $\y \models \alpha$.
We define our proposal as
\begin{equation}
    \q(\y) = \sum_{\ytilde} \p(\Tilde{\y}) \cdot \p_{\Tilde{\y}}(\y \mid \alpha),
\end{equation}
where $\p(\ytilde)$ is the autoregressive distribution, and $\p_{\ytilde}(\y \mid \alpha)$
is a distribution over projections $\y$ of the unconstrained $\ytilde$ given the constraint
$\alpha$.
The above definition outlines a two-step procedure for sampling a sentence 
$\y$ that satisfies a given constraint $\alpha$.
We sample a sentence $\ytilde$ autoregressively from $\p(\ytilde)$, followed by sampling $\y$
from the distribution conditioned on $\ytilde$ and the constraint $\alpha$, $\p_{\ytilde}(\y \mid \alpha)$.
By incorporating the autoregressive distribution $\p(\ytilde)$, we ensure that we can potentially generate any sentence.
$\p_{\ytilde}(\y \mid \alpha)$ then refines $\ytilde$ by \emph{projecting} it to satisfy the constraint $\alpha$.
It is straightforward to sample from $\p(\ytilde)$, but what exactly is $\p_{\ytilde}(\y)$, and how do we condition it on $\alpha$?
Moreover, a proposal distribution typically implies biased samples;
how do we correct for this bias?

\subsection{A Sample Induces a Local, Tractable distribution}\label{sec:pseudolikelihood}
Our goal is to design a probability distribution $\p_{\ytilde}(\y \mid \alpha)$
that is first, tractable for conditioning on the constraint $\alpha$ and likelihood evaluation,
and second, assigns high probability mass to sentences that are close to the model sample $y$
and low probability mass to sentences that are far away, thereby providing a sample-efficient
approximation of the true distribution~\citep{koller2009probabilistic}.

As noted earlier, the hardness of computing the conditional probability
distribution in~\cref{eqn:pr_conditional} is in large part due to the autoregressive
nature of the LLM distribution.
A logical constraint might have exponentially-many solutions, yet lend itself
to reusing of solutions to sub-problems, and therefore a tractable computation of the
normalizing constant, the denominator in~\cref{eqn:pr_conditional}.
An example being the $n$ choose $k$ constraint~\citep{AhmedICLR23}, where the
conditional distribution can be computed in quadratic time under
the fully-factorized distribution, despite having combinatorially many solutions.
Moving away from fully-factorized distribution and towards autoregressive distributions, however, requires that we enumerate all sentences, even for very simple constraints.

To sidestep the hardness of the autoregressive distribution, we attempt to move towards
the tractability of fully-factorized distributions, while retaining as much of the contextual
information.
To that end, we consider the \emph{pseudolikelihood} $\Tilde{\p}(\cdot)$ of a sentence $\ytilde$~\citep{Besag1975, ahmed2023pseudosl}, \ie %
\begin{equation}\label{eq:pseudo_likelihood}
    \p(\ytilde) \approx \Tilde{\p}(\ytilde) \coloneqq \prod_{i}\p(\ytilde_i \mid \ytilde_{-i}),
\end{equation}
where $\ytilde_{-i}$ denotes $\ytilde_1, \ldots, \ytilde_{i-1}, \ytilde_{i+1}, \ldots, \ytilde_n$.
Unfortunately, \cref{eq:pseudo_likelihood} above would still not ensure the tractability of~\cref{eqn:pr_conditional}
since different solutions depend on different sets of conditionals.
Instead, we define the pseudolikelihood of a sentence $\y$ \emph{in the neighborhood 
of a sentence}~$\ytilde$
\begin{equation}
    \Tilde{\p}_{\ytilde}(\y) \coloneqq \prod_{i}\p(\y_i \mid \ytilde_{-i}) \label{eq:local_psl}
\end{equation}
which can be understood as the \emph{contextualized probability} of a sentence $\y$ given the context $\ytilde$.
We will next show how, given the structure of the contextualized pseudolikelihood distribution,
we are able to efficiently condition it on a constraint $\alpha$.
Furthermore, \cite{ahmed2023pseudosl} have shown the contextualized pseudolikelihood distribution to be a local, high-fidelity approximation of the LLM distribution.
That is, the distribution has low entropy, considering only assignments centered around the model sample, while at the same
time having low KL-divergence meaning that
our approximation is faithful to the true distribution in the neighborhood of the model sample.

\subsection{Constraint Compilation and Tractable Operations}\label{sec:tractable_computation}
We appeal to knowledge compilation, a class of methods that transform, or \emph{compile}, a
logical constraint into a tractable target form which represents functions as parameterized
computational graphs, or \emph{circuits}.
Knowledge compilers allow us to programmatically specify constraints as Python~\citep{pysdd}
or PyTorch functions~\citep{Ahmed22pylon} from which they construct circuits.
By enforcing certain structural properties on the compiled circuits we can enable the tractable
computation of corresponding classes of probabilistic queries over the encoded functions.
As such, circuits provide a language for both constructing and reasoning about tractable 
representations.
An example of a logical constraint specified as a PyTorch function which gets compiled
into a constraint circuit is shown in the bottom left of ~\cref{fig:pipeline} with the corresponding
circuit in ~\cref{fig:pipeline}, right.

Formally, a \emph{logical circuit} is a directed, acyclic computational 
graph representing a logical formula over variables $\X$.
Each node $n$ in the DAG encodes a logical sub-formula, denoted $[n]$.
Each inner node in the graph is an AND or an OR gate, and each leaf 
node encodes a Boolean literal ($Y$ or $\lnot Y$). 
We denote by $\ch(n)$ the set of a node $n$'s children.
We associate with every node $n$ a \emph{scope} function $\phi(\cdot)$ such that
$\phi(n) \subseteq X$ evaluates to the subset of variables the subfunction at $n$
is defined over.

A circuit is \emph{decomposable} if the inputs of every AND gate depend on 
disjoint sets of variables i.e.\ for $\alpha = \beta \land \gamma$, $\vars
(\beta) \cap \vars(\gamma) = \varnothing$.
A circuit is said to be \emph{structured-decomposable} if every AND
gate is decomposable, and any pair of AND gates sharing the same scope decompose
in the same way.
Intuitively, decomposable AND gates encode local factorizations over
variables of the function.
We assume that decomposable AND gates always have two inputs, a condition
enforceable on any circuit in exchange for a polynomial size increase~\citep{vergari2015simplifying,peharz2020einsum}.

A second useful property is \emph{smoothness}.
A circuit is said to be \emph{smooth} if the children
of every OR gate depend on the same set of variables i.e. for 
$\alpha = \bigvee_i \beta_i$, we have that $\vars(\beta_i) = 
\vars(\beta_j)\ \forall i,j$. Decomposability and smoothness 
are sufficient and necessary for tractable integration
over arbitrary sets of variables in a single pass, as they allow 
larger integrals to decompose into smaller~ones.

Further, a circuit is said to be \emph{deterministic} if, 
for any input, at most one child of every OR node has a non-zero
output i.e. for $\alpha = \bigvee_i \beta_i$, we have that $ \beta_i 
\land \beta_j = \bot$ for all $i \neq j$.
Similar to decomposability, determinism induces a recursive 
partitioning of the function, but over the support of the
function.
We consider a stronger form of determinism, \emph{strong determinism},
A circuit is said to be \emph{strongly deterministic} if,
for every OR node, $\alpha = \bigvee_i (\beta_i \land \gamma_i)$, the
$\beta_i$'s are mutually exclusive \ie $\beta_i \land \beta_j = \bot$ for
any $i \neq j$.
Ensuring they are also exhaustive \ie $\bigvee_i \beta_i =\top$,
along with structured-decomposability, we recover sentential decision diagrams
\citep{darwiche11}, or \emph{constraint circuits}. 

Given a constraint circuit $c_\alpha$ that encodes a logical constraint
$\alpha$\ we can compute the pseudolikelihood $\Tilde{\q}(\alpha)$ by feeding
the probability of each literal at the corresponding leaf node and evaluating
the circuit upwards, taking sums at OR gates and products at AND gates.
This defines a distribution $\p_{\y}(\Tilde{y} \mid \alpha)$.
To sample from the distribution, starting from the root node, we trace the
circuit top-down, sampling a child at every OR-gate encountered.
\cref{fig:pipeline} (center) shows an example of computing such a distribution,
and~\cref{fig:pipeline} (right) demonstrates the process of sampling from it.

 \begin{minipage}[t]{\dimexpr.53\textwidth-.5\columnsep}
 \vspace{-1.50em}
\begin{algorithm}[H]
\begin{algorithmic}[1]
    \STATE \textbf{Input}: Constraint $\alpha$, LLM distribution~$\p(\cdot)$
    \STATE \textbf{Output}: $\Tilde{p}_{\ytilde}(\cdot \mid \alpha)$ defined in \cref{eq:local_psl}\\\vspace{2pt}
    \STATE $\ytilde \sim \p(\cdot)$\\\vspace{2pt}
    \hspace{-11pt}\textcolor{spearmint}{$\triangleright$ Expand the batch to contain all perturbations}
    \\\hspace{-11pt}\textcolor{spearmint}{$\triangleright$ of $\y$ that are a Hamming distance of 1 away}
    \STATE $\ytilde = \ytilde .\mathtt{expand}(\text{n}, \text{vocab})$
    \STATE $\ytilde[:, \mathtt{range(n)}, :,\mathtt{range(n)}]=~\mathtt{range(vocab)}$\\\vspace{2pt}
    \hspace{-11pt}\textcolor{spearmint}{$\triangleright$ Evaluate expanded samples through model}
    \STATE $\log \p$ = $\p(\ytilde)$.$\mathtt{log\_softmax}$$(\mathtt{dim=\text{-}1})$\\\vspace{2pt}
    \hspace{-11pt}\textcolor{spearmint}{$\triangleright$ Compute $\log \Tilde{\p}_{\btheta}[i][j] = \log \p_{\btheta}(\y_{j}|\y_{-j})$}
    \STATE $\log \Tilde{\p} = \log \p - \log \p\mathtt{.logsumexp(dim=\text{-}1)}$\\\vspace{2pt}
    \hspace{-11pt}\textcolor{spearmint}{$\triangleright$ A vector of conditional marginals $\Tilde{p}_{\ytilde}(y_{ij} \mid \alpha)$} 
    \STATE $\mathtt{return}$ $\Tilde{p}_{\ytilde}(\cdot \mid \alpha)$ 
    
\end{algorithmic}
\caption{Compute $\Tilde{p}_{\y}(\Tilde{\y} \mid \alpha)$}
\label{algo:psl}
\end{algorithm}
 \end{minipage}
 \begin{minipage}[t]{\dimexpr.53\textwidth-.5\columnsep}
 \vspace{-1.50em}
 \begin{algorithm}[H]
\begin{algorithmic}[1]
    \STATE \textbf{Input}: Constraint $\alpha$, LLM distribution~$\p(\cdot)$
    \STATE \textbf{Output}: $\y$ drawn approximately from $p(\y \mid \alpha)$\\\vspace{2pt}
    \COMMENT \hspace{-11pt}\textcolor{spearmint}{$\triangleright$ Sample $\y$ and $\Tilde{\y}$ from $\p(\y)$ and $\p_{\y}(\Tilde{\y} \mid \alpha)$ resp.\ }\\\vspace{2pt}
    \STATE $\ytilde \sim \p(\cdot)$\\\vspace{2pt}
    \STATE $\y \sim \p_{\y}(\cdot \mid \alpha)$\\\vspace{2pt}
    \COMMENT \hspace{-11pt}\textcolor{spearmint}{$\triangleright$ Compute importance weights}\\\vspace{2pt}
    \STATE $\mathtt{\q\_w} = \p(\ytilde) \cdot \p_{\ytilde}(\y \mid \alpha)$\\\vspace{2pt}
    \STATE $\mathtt{\p\_w} = \p(\y)\cdot \liv\mathtt{canonize}(\y)\models \alpha\riv \cdot \p_{\y}(\ytilde)$\label{line:retokenize}\\\vspace{2pt}
    \STATE $\mathtt{w =\p\_w/\q\_w}$\\\vspace{2pt}
    \COMMENT \hspace{-11pt}\textcolor{spearmint}{$\triangleright$ Resample distribution according to weights}\\\vspace{2pt}
    \STATE $\p^{*} = \mathtt{Categorical(weights=w)}$\\\vspace{2pt} %
    \STATE $\mathtt{return}$ $\p^{*}\mathtt{.sample}(\ )$\vspace{0.1em}\label{line:return}   
\end{algorithmic}
\caption{Locally Constrained Resampling\vspace{0.13em}}
\label{algo:conditional}
\end{algorithm}
\end{minipage}

\subsection{Intermezzo: Constraint Circuits and DFAs}
Constraint circuits can implement decisions of the form 
\begin{equation}\label{eq:sent-dec-intro}
\bigvee_{i=1}^m \beta_i(X) \wedge \gamma_i(Y),
\end{equation}

at OR gates, where $\X$ and $\Y$ are disjoint \emph{sets} of variables, as opposed
to DFAs that are restricted to Binary (or Shannon) decisions and therefore boil
down to very special decisions of the form
$$(\neg x \wedge \gamma_1(\Y)) \vee (x \wedge \gamma_2(\Y))$$
where the variable $X$ is not in the variable set $\Y$.
Therefore, constraint circuits are exponentially more succinct than DFAs, i.e., there
are families of functions can only be represented by a exponentially-sized DFAs, but 
have a polynomially-sized constraint circuit~\citep{SDDexpsucc}.
Barring such families of functions, constraint circuits are also empirically more succinct than DFAs~\citep{Xue2012SDDs}.
In practice, this allows us to construct shallower  (\ie less layers) and wider (\ie units per layer) constraint circuits that are more amenable to GPU parallelization~\citep{pyjuice}, exhibiting a lower vectorized computational complexity~\citep{AhmedICLR23}.
\subsection{Correcting Sample Bias: Importance Sampling\ldots\,Resampling}\label{subsec:algorithm}

We have now sampled $\y$ from our proposal distribution $\q(\y)$, where $\y$ is a projection of some latent unconstrained sample $\ytilde$, and satisfies the constraint $\alpha$.
However, our proposal distribution $\q(\y)$ might not align perfectly with our target distribution $\p(\y \mid \alpha)$.
We therefore need to account for the mismatch between the two distributions, which we can do by
defining importance weights that are a function of the original $\y$, and its projection $\Tilde{\y}$.
We start by defining the true augmented distribution
\begin{equation}
    \p(\y \mid \alpha) \propto \sum_{\ytilde} \p(\y, \alpha) \cdot \p_{\y}(\ytilde).
\end{equation}
This factorization reflects the process of first generating a constrained sentence $\y$, and marginalizing over all the unconstrained sentences $\ytilde$ that could have given rise to $\y$, and can be therefore be thought of as reversing the proposal distribution.
We calculate the self-normalized importance weights
\begin{equation}
w = \frac{\p(\y, \alpha) \cdot \p_{\y}(\ytilde)}{\p(\ytilde) \cdot \p_{\ytilde}(\y \mid \alpha)}
\end{equation} 
These weights quantify the discrepancy between the proposal and target distributions, allowing us to correct for the biased sampling.
Note, however, that the importance sampling does not generate samples from the target distribution $\p(\y\mid\alpha)$, only a set of 
weighted particles.
Rather, to transform our weighted particles into samples drawn from $\p(\y\mid\alpha)$ we need to apply a resampling step according to
the importance weights.
That is, given particles $\s_i$ with their corresponding importance weights $w_i$, if we resample
$\s_i$ with replacement from $\{\s_1, \ldots, \s_n\}$ with probabilities proportional to the 
importance weights \ie 
\begin{align*}
    \p^*(\s_i) &= \frac{w_i}{\sum_{j=1}^n w_j}
\end{align*}
then $\s_i$ is drawn from the true the distribution in the limit of a large sample size $n$. 
Our full algorithm is shown in~\cref{algo:conditional}, and follows PyTorch syntax~\citep{pazske2019}.
One thing to note is the use of the $\mathtt{retokenize}$ function on line~\ref{line:retokenize} in~\cref{algo:conditional},
where $\mathtt{canonize}$ returns the \emph{canonical} tokenization of an expression~\citep{tokenization}.
This ensures that, \eg in the case of language detoxification, we are banning all the possible ways of generating a given expression.\footnote{This issue has received an amount of attention in the past, see for instance \url{https://github.com/huggingface/transformers/issues/17504}}
We can now prove that \cref{algo:conditional} is guaranteed to return samples that satisfy the
constraint $\alpha$.
Our algorithm is implemented in log-space to preserve numerical stability while handling very small probabilities.

\begin{restatable}{thm}{mainthm}
\label{thm:mainthm}
    \textsc{Locally Constrained Resampling} in \cref{algo:conditional} returns a sample $\y$ \st $\y \models \alpha$.
\end{restatable}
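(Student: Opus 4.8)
The plan is to trace the provenance of the returned sample through the three sampling steps of \cref{algo:conditional} and argue that whatever survives the final resampling step must be a model of $\alpha$. The argument splits into two essentially independent pieces: (i) every sample produced by the proposal $\p_{\ytilde}(\cdot\mid\alpha)$ is already a Boolean model of $\alpha$ by construction, and (ii) even when the canonical re-tokenization of such a sample breaks satisfaction, the importance weight of that particle is exactly zero, so it cannot be returned.

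For (i), recall that $\p_{\ytilde}(\cdot\mid\alpha)$ is obtained by feeding the per-literal pseudolikelihood probabilities into the constraint circuit $c_\alpha$ and then sampling by tracing the circuit top-down: at each OR gate we descend into a single child, at each AND gate we descend into all children, and at each leaf we commit to the encoded literal. First I would show that this traversal terminates with a \emph{complete, conflict-free} assignment $\y$ to $\Y$ (with the one-hot-per-time-step structure either folded into $\alpha$ or built into the circuit's variable grouping): smoothness forces the branches of every OR gate to range over the same scope, while structured decomposability forces the branches of every AND gate to partition the scope, so by induction over the DAG the literals fixed along the trace cover exactly $\phi(\text{root}) = \Y$ with no variable set twice or inconsistently. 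Next I would argue $\y\models\alpha$ by induction from the leaves upward: $\y$ satisfies the subformula $[n]$ of every visited node $n$ — a leaf holds its literal, an AND node because all its children are satisfied, an OR node because the chosen child is (strong determinism makes this choice well-defined and the disjunction faithful). Applying this at the root and using that $c_\alpha$ represents $\alpha$ gives $\y\models\alpha$.

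For (ii) and the conclusion, note that line~\ref{line:retokenize} of \cref{algo:conditional} sets $\mathtt{\p\_w} = \p(\y)\cdot\liv\mathtt{canonize}(\y)\models\alpha\riv\cdot\p_{\y}(\ytilde)$, so any particle whose canonical tokenization fails $\alpha$ gets $\mathtt{\p\_w}=0$, hence importance weight $w = \mathtt{\p\_w}/\mathtt{\q\_w} = 0$. The final line draws from $\p^{*}=\mathrm{Categorical}(\mathrm{weights}=w)$, which assigns probability $0$ to every zero-weight particle; therefore the returned $\y$ has $w>0$, and by the definition of $\mathtt{\p\_w}$ this forces $\liv\mathtt{canonize}(\y)\models\alpha\riv = 1$, i.e.\ $\y\models\alpha$. (One tacitly assumes $\alpha$ is satisfiable and the proposal places mass on a canonically-tokenized model, so that at least one weight is positive and $\p^{*}$ is well-defined — exactly the regime in which the task is meaningful.) The main obstacle is the bookkeeping in piece (i): making precise, from structured decomposability, smoothness, and strong determinism, that the top-down trace of $c_\alpha$ yields a complete conflict-free assignment and that this assignment is a model of $\alpha$. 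Everything after that is immediate from the presence of the satisfaction indicator in the importance weight and the definition of categorical resampling.
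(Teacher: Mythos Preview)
Your proof is correct and addresses the same two concerns the paper does --- that the circuit-sampling step produces a model of $\alpha$, and that the canonicalization indicator in the importance weight kills any particle whose canonical re-tokenization fails $\alpha$ --- but the route to the first point is genuinely different. The paper does not give your inductive argument in part~(i). Instead, it observes that $\Tilde{\p}_{\ytilde}$ is fully factorized, realizes it as a trivial product-of-Bernoullis circuit, compiles $\alpha$ into a constraint circuit via an explicit OBDD-style branching construction, and then invokes off-the-shelf results (Theorem~3.2 and Proposition~2.1 of Vergari et al.,~2021) to form and normalize the product circuit $\Tilde{\p}_{\ytilde}(\cdot,\alpha)$; that the resulting support lies inside $m(\alpha)$ is left implicit in the product-with-indicator step. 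Your approach is more self-contained and operationally transparent: you reason directly about the top-down trace using smoothness, structured decomposability, and (strong) determinism, which avoids the external citations and actually explains \emph{why} the sampled assignment satisfies $\alpha$. The paper's version buys compactness and a clean reduction to known circuit algebra; yours buys an elementary, dependency-free argument.

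One small wrinkle worth tightening: your closing ``i.e.\ $\y\models\alpha$'' conflates two different claims --- $\liv\mathtt{canonize}(\y)\models\alpha\riv=1$ is a statement about the canonical re-tokenization, not about the token sequence $\y$ itself, and the two need not coincide. This does not damage the proof, because your part~(i) already establishes $\y\models\alpha$ directly from the circuit trace; the theorem therefore follows from~(i) alone, and~(ii) is really handling the separate tokenization-leakage concern the paper also flags rather than completing the argument for the stated claim.
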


\begin{figure*}[t!]
\begin{minipage}{0.39\textwidth}
\scalebox{.73}{
\vspace{-10em}
\hspace{-0.5cm}
\begin{subfigure}[b]{1.0\textwidth}
\centering
\begin{align*}
\begin{array}{l>{\columncolor{pastel-red!20}}r:>{\columncolor{spearmint!20}}r:>{\columncolor{palette-blue!20!white}}r} 
\p(\y) \sim &\texttt{I: 0.5} & \texttt{hate: 0.3} & \texttt{dogs: 0.4}\\ %
                  &\texttt{He: 0.25}& \texttt{love: 0.5} & \texttt{cats: 0.4}\\
                  &\texttt{She: 0.25}& \texttt{adore: 0.2} & \texttt{rats: 0.2}\\
\end{array}\\
\begin{array}{l>{\columncolor{pastel-red!20}}r:>{\columncolor{spearmint!20}}r:>{\columncolor{palette-blue!20!white}}r} 
&\Tilde{\p}(\texttt{I): 0.6} & \Tilde{\p}(\texttt{hate): 0.4} & \Tilde{\p}(\texttt{dogs): 0.3}\\ %
                  &\Tilde{\p}(\texttt{He): 0.2}& \Tilde{\p}(\texttt{love): 0.5} & \Tilde{\p}(\texttt{cats): 0.3}\\
                  &\Tilde{\p}(\texttt{She): 0.2}&\Tilde{\p}(\texttt{adore): 0.1} & \Tilde{\p}(\texttt{rats): 0.2}\\
\end{array}\\
\end{align*}
\end{subfigure}
}

\begin{minted}[fontsize=\scriptsize, escapeinside=||,mathescape=true]{python}
idx = Tokenizer.tokenize(" hate")
|$\alpha$| = True
for i in range(seq_len):
    |$\alpha$| = |$\alpha$| and not tokens(i, idx)
return |$\alpha$|
\end{minted}
\end{minipage}
\hfill
\begin{minipage}{0.25\textwidth}
\begin{subfigure}[b]{1.0\textwidth}
\centering
\scalebox{.80}{
\begin{tikzpicture}[circuit logic US]
\node (or1) [or gate, inputs=nn, rotate=90, scale=0.5] at (4.25,4.4) {};
\node (output) at (4.25, 5.0) {};
\draw (or1) -- (output) node[pos=0.2, above right, color=sanae5] {\scriptsize$0.60$};
\node (and1) [and gate, inputs=nn, rotate=90, scale=0.5] at (3,3.5) {};
\draw (or1.input 1) -- ++ (down: 0.25) -| (and1) node[pos=0.2, above left, color=sanae5] {\scriptsize$0.36$};
\node (and2) [and gate, inputs=nn, rotate=90, scale=0.5] at (5.75,3.5) {};
\draw (or1.input 2) -- ++ (down: 0.25) -| (and2) node[pos=0.2, above right, color=sanae5] {\scriptsize$0.12$};
\node (and3) [and gate, inputs=nn, rotate=90, scale=0.5] at (4.25,3.5) {};
\draw (4.25,3.7) -- (4.25,4.25)node[pos=0.1, right, color=sanae5] {\scriptsize$0.12$};

\node (or2) [or gate, inputs=nn, rotate=90, scale=0.5] at (4.25,2) {};
\node (and4) [and gate, inputs=nn, rotate=90, scale=0.5] at (3.5,1.5) {};
\node (and5) [and gate, inputs=nn, rotate=90, scale=0.5] at (5,1.5) {};

\node (or3) [or gate, inputs=nn, rotate=90, scale=0.5] at (4.25,0.5) {};

\node (i) at (2.46, 2.8) {I};
\node (itext) at (2.46, 2.5) {\scriptsize$0.6$};
\node (he) at (3.8, 2.8) {He};
\node (hetext) at (3.8, 2.5) {\scriptsize$0.2$};
\node (she) at (6.3, 2.8) {She};
\node (itext) at (6.3, 2.5) {\scriptsize$0.2$};

\node (loves) at (2.5, 0.8) {loves};
\node (lovestext) at (2.5, 0.5) {\scriptsize$0.5$};
\node (adores) at (6.1, 0.8) {adores};
\node (adorestext) at (6.1, 0.5) {\scriptsize$0.1$};

\node (dogs) at (2.5, -0.4) {dogs};
\node (cats) at (6.1, -0.4) {cats};
\node (rats) at (4.3, -0.4) {rats};
\node (dogstext) at (2.5, -0.7) {\scriptsize$0.3$};
\node (catstext) at (6.1, -0.7) {\scriptsize$0.3$};
\node (ratstext) at (4.3, -0.7) {\scriptsize$0.2$};
\draw (or3.input 1) -- ++ (down: 0.25) -- ++ (left: 0.20) -| (dogs);
\draw (or3.input 2) -- ++ (down: 0.25) -- ++ (right: 0.20) -| (cats);
\draw (4.25,-0.20) -- (4.25,0.38);

\draw (or2.input 1) -- ++ (down: 0.10) -- ++ (left: 0.20) -| (and4)node[pos=-0.4, above right, color=sanae5] {\scriptsize$0.1$};
\draw (or2.input 2) -- ++ (down: 0.10) -- ++ (right: 0.20) -| (and5)node[pos=-0.4, above left, color=sanae5] {\scriptsize$0.5$};

\draw (and4.input 1) -- ++ (down: 0.10) -- ++ (left: 0.20) -| (loves);
\draw (and4.input 2) -- ++ (down: 0.10) -- ++ (right: 0.20) -| (or3);
\draw (and5.input 1) -- ++ (down: 0.10) -- ++ (left: 0.20) -| (or3) node[pos=0.5, above, color=sanae5] {\scriptsize$1$};
\draw (and5.input 2) -- ++ (down: 0.10) -- ++ (right: 0.20) -| (adores);

\draw (and1.input 1) -- ++ (down: 0.10) -- ++ (left: 0.20) -| (i);
\draw (and1.input 2) -- ++ (down: 0.80) -- ++ (right: 0.20) -| (or2);
\draw (and2.input 1) -- ++ (down: 0.80) -- ++ (left: 0.20) -| (or2);
\draw (and2.input 2) -- ++ (down: 0.10) -- ++ (right: 0.20) -| (she);
\draw (and3.input 1) -- ++ (down: 0.10) -- ++ (left: 0.20) -| (he);
\draw (and3.input 2) -- ++ (down: 0.10) -| (or2)node[pos=-0.1, below right, color=sanae5] {\scriptsize$0.6$};
\end{tikzpicture}
}
\end{subfigure}
\end{minipage}
\begin{minipage}{0.28\textwidth}
\begin{subfigure}[b]{1.0\textwidth}
\centering
\scalebox{.80}{
\begin{tikzpicture}[circuit logic US]
\node (i) at (4.0, 4.8) {$\downarrow$};
\node (or1) [or gate, inputs=nn, rotate=90, scale=0.5] at (4.25,4.4) {};
\node (output) at (4.25, 5.0) {};
\draw (or1) -- (output) node[pos=0.2, above right, color=sanae5] {\scriptsize$0.60$};
\node (and1) [and gate, inputs=nn, rotate=90, scale=0.5] at (3,3.5) {};
\draw (or1.input 1) -- ++ (down: 0.25) -| (and1) node[pos=0.2, above left, color=sanae5] {\scriptsize$0.36$};
\node (and2) [and gate, inputs=nn, rotate=90, scale=0.5] at (5.75,3.5) {};
\draw (or1.input 2) -- ++ (down: 0.25) -| (and2) node[pos=0.2, above right, color=sanae5] {\scriptsize$0.12$};
\node (and3) [and gate, inputs=nn, rotate=90, scale=0.5, fill=palette-blue!20!white] at (4.25,3.5) {};
\draw (4.25,3.7) -- (4.25,4.25)node[pos=0.1, right, color=sanae5] {\scriptsize$0.12$};

\node (or2) [or gate, inputs=nn, rotate=90, scale=0.5] at (4.25,2) {};
\node (and4) [and gate, inputs=nn, rotate=90, scale=0.5, fill=palette-blue!20!white] at (3.5,1.5) {};
\node (and5) [and gate, inputs=nn, rotate=90, scale=0.5] at (5,1.5) {};

\node (or3) [or gate, inputs=nn, rotate=90, scale=0.5] at (4.25,0.5) {};

\node (i) at (2.46, 2.8) {I};
\node (itext) at (2.46, 2.5) {\scriptsize$0.6$};
\node (he) at (3.8, 2.8) {\colorbox{palette-blue!20!white}{He}};
\node (hetext) at (3.8, 2.5) {\scriptsize$0.2$};
\node (she) at (6.3, 2.8) {She};
\node (itext) at (6.3, 2.5) {\scriptsize$0.2$};

\node (loves) at (2.5, 0.8) {\colorbox{palette-blue!20!white}{loves}};
\node (lovestext) at (2.5, 0.5) {\scriptsize$0.5$};
\node (adores) at (6.1, 0.8) {adores};
\node (adorestext) at (6.1, 0.5) {\scriptsize$0.1$};

\node (dogs) at (2.5, -0.4) {\colorbox{palette-blue!20!white}{dogs}};
\node (cats) at (6.1, -0.4) {cats};
\node (rats) at (4.3, -0.4) {rats};
\node (dogstext) at (2.5, -0.7) {\scriptsize$0.3$};
\node (catstext) at (6.1, -0.7) {\scriptsize$0.3$};
\node (ratstext) at (4.3, -0.7) {\scriptsize$0.2$};
\draw (or3.input 1) -- ++ (down: 0.25) -- ++ (left: 0.20) -| (dogs);
\draw (or3.input 2) -- ++ (down: 0.25) -- ++ (right: 0.20) -| (cats);
\draw (4.25,-0.20) -- (4.25,0.38);

\draw (or2.input 1) -- ++ (down: 0.10) -- ++ (left: 0.20) -| (and4)node[pos=-0.4, above right, color=sanae5] {\scriptsize$0.1$};
\draw (or2.input 2) -- ++ (down: 0.10) -- ++ (right: 0.20) -| (and5)node[pos=-0.4, above left, color=sanae5] {\scriptsize$0.5$};

\draw (and4.input 1) -- ++ (down: 0.10) -- ++ (left: 0.20) -| (loves);
\draw (and4.input 2) -- ++ (down: 0.10) -- ++ (right: 0.20) -| (or3);
\draw (and5.input 1) -- ++ (down: 0.10) -- ++ (left: 0.20) -| (or3) node[pos=0.5, above, color=sanae5] {\scriptsize$1$};
\draw (and5.input 2) -- ++ (down: 0.10) -- ++ (right: 0.20) -| (adores);

\draw (and1.input 1) -- ++ (down: 0.10) -- ++ (left: 0.20) -| (i);
\draw (and1.input 2) -- ++ (down: 0.80) -- ++ (right: 0.20) -| (or2);
\draw (and2.input 1) -- ++ (down: 0.80) -- ++ (left: 0.20) -| (or2);
\draw (and2.input 2) -- ++ (down: 0.10) -- ++ (right: 0.20) -| (she);
\draw (and3.input 1) -- ++ (down: 0.10) -- ++ (left: 0.20) -| (he);
\draw (and3.input 2) -- ++ (down: 0.10) -| (or2)node[pos=-0.1, below right, color=sanae5] {\scriptsize$0.6$};
\end{tikzpicture}
}
\end{subfigure}
\end{minipage}
\caption{%
\textbf{Constructing and sampling the proposal distribution.}
(Top left) We start by sampling a sentence $\ytilde$ from the model $\p$.
Our goal is to compute the full conditional probability of every word in the vocab \ie $\Tilde{\p}(\Tilde{y}_{ij}) \coloneqq \p(\Tilde{y}_{ij} \mid \ytilde_{-i})$.
We start by expanding the sampled sentence $\ytilde$, including all sentences that are a Hamming distance of $1$ away from the sample $\y$.
We proceed by (batch) evaluating the samples through the model, obtaining the joint probability of each sample.
We then normalize along each column, obtaining the conditionals $\p(\Tilde{y}_{ij})$.
(Bottom left) We can easily specify a logical constraint that prevents the word `` hate'' from appearing as a simple python function
that gets compiled in the constraint circuit on the right.
(Center) A logical circuit encoding constraint $\lnot\text{hates}$, with a simplified vocab is shown in the figure.
To construct the distribution $\p_{\ytilde}(\y\mid\alpha)$, we feed the computed contextual probabilities at the corresponding literals.
We push the probabilities upwards, taking products at AND nodes and sums at OR nodes.
This induces a distribution $\p_{\ytilde}(\y\mid\alpha)$.
(Right) To sample a from this distribution, we start at the root of the circuit, sampling a child of every OR gate according to the logits of the distribution, and concatenating at every AND gate.
In this case, we sample the sentence ``He loves dogs'' satisfying the constraint.
}
\label{fig:pipeline}
\end{figure*}
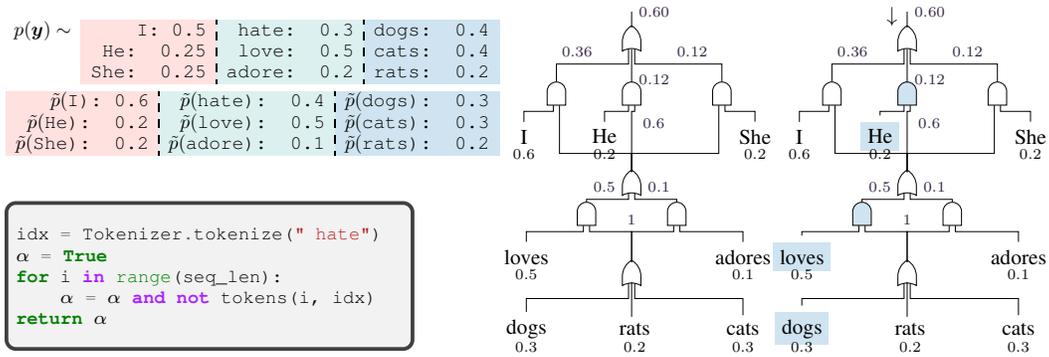

\section{Related work}
\label{sec:related-work}
There has been a long line of work tackling constrained generation with LLMs.
One of the earlier approaches was to use search-based decoding algorithms, such as NeuroLogic Decoding~\citep{lu2021neurologic, lu2022neurologic}.
While these methods explicitly search for high-probability sequences that satisfy the constraint,
they face scalability issues due to the exponential growth of the search space as the sequence 
length increases.
Another set of techniques that include GeDi~\citep{krause2021gedi}, FUDGE~\citep{yang2021fudge}, and NADO~\citep{meng2022nado} employ auxiliary neural classifiers to approximate the intractable conditional
distribution, but do not guarantee that the constraint is satisfied and require that a classifier be trained
for every new constraint type.
Approximate inference methods attempt to approximate the intractable conditional distributions~\citep{qin2022cold, hu2023amortizing, lew2023sequential} but suffer from high variance and do not guarantee constraint satisfaction.

Recently Outlines~\citep{outlines} and Guidance~\citep{guidance}, and along similar lines SGLang \citep{zheng2024sglang}, were proposed, also guaranteeing that the constraint is satisfied. Outlines employs a precompilation step where constraints specified in regular expressions are compiled into DFAs that are ``indexed'' to create a token-based overlay. This overlay guides the decoding process, ensuring adherence to the constraints. \citet{koo2024} recently recast the entire process in an automata-theoretic framework. Similar to Outlines, Guidance utilizes a trie to efficiently store and search through valid token continuations based on the grammar. This allows for dynamic vocabulary matching at each decoding step, offering flexibility potentially at the cost of impacting efficiency.
These approaches are considered the ``defacto'' approaches for constrained generation.

More recently, GeLaTo~\citep{GeLaTo} and subsequently CTRL-G~\citep{CtrlG} have utilized Hidden Markov Models (HMMs) to guide the generation from LLMs towards constraint satisfaction, but requires training a new HMM for every target model.
The very recently proposed ASAp~\citep{grammaraligneddecoding} aims to debias the greedy approach through repeated sampling and discovery of mass assigned to non-grammatical completions, reducing each overapproximation to make it more accurate.
While similar in intention, our approach relies on a closed-form tractable approximation of the target distribution to correct for the bias, avoiding the need to oversample the LLM.

Finally, some other approaches no longer predictively mask logits, but instead sample unconstrained continuations and reject invalid ones post-hoc.
For example, PICARD \citep{picard} converts the top-$k$ tokens to text and performs various levels of validation.
While such approaches are very simple in principle, and in fact perform exact Bayesian conditioning, the number of samples required can be prohibitive, especially when the constraint requires selecting low probability tokens.

\begin{table}[t]
\parbox{.46\textwidth}{
\centering
{
\begin{tabular}{@{}l l l @{}}
Test accuracy \%  & Exact & Consistent\\
\midrule \midrule
CNN-LSTM &   $62.00$ & $76.60$\\
\midrule
+ oversampling &   $69.10\%$ & $84.50\%$\\
+ \ours (Ours) &   $\mathbf{78.13\%}$ & $\mathbf{100\%}$\\
\end{tabular}
}
\caption{Experimental results on Warcraft.}
\label{tab:warcraft_results}
}
\parbox{.45\textwidth}{
\centering
{
\begin{tabular}{@{}l l l @{}}
Test accuracy \%  &  Exact & Consistent\\
\midrule \midrule
Gemini 1.5 Flash&    $26.20\%$  & $26.20\%$\\
GPT-4o mini&    $44.90\%$  & $44.90\%$\\
\midrule
Llama3-8B + \ours (Ours) &    $\mathbf{100\%}$&  $\mathbf{100\%}$
\end{tabular}
}
\caption{Experimental results on Sudoku.}
\label{tab:sudoku_results}
}
\end{table}
\section{Experimental Evaluation}\label{sec:experiments}
We evaluate our approach \ours on several tasks spanning a number
of domains.
We start by evaluating on Warcraft shortest-path finding, where we are given
an image of a Warcraft tilemap, and are tasked with \emph{\autoregressively}
generating one of the potentially many minimum-cost paths between two end points
conditioned on the map, where the cost is determined by the \emph{underlying}
cost of the tiles spanned by the path.
We move on to evaluating on the classic, yet challenging, task of solving a $9 \times 9$
Sudoku puzzle where an LLM is presented with an incomplete Sudoku puzzle and asked to
provide the solved puzzle without extraneous text.
We also evaluate on the task of LLM detoxification.
In this task, we are interested in the generations produced by an LLM when presented
with a prompt input by the user.
More specifically, we are interested not only in how good these models are at the modeling
aspect, but also how \emph{toxic} their outputs might be, a measure which includes sexual
explicitness, identity attacks, and profanity, among others.
Our goal in this task is then to shift the model's distribution away from toxic generations,
and toward nontoxic ones, all while maintaining its original ability to model text.
We believe this to be a timely and important problem due to the prevalence and widespread usage
of LLMs coupled with the fact that previous work~\citep{Gehman2020RealToxicityPromptsEN}
has found non-negligible amounts of toxic, harmful, and abusive text in the corpora used to train LLMs.
Experimental details, hardware specifications, and training details are provided in the appendix.
The implementation for our approach and the code to reproduce all of our experiments will be made available at \url{https://github.com/KareemYousrii/Gen-C}.

\paragraph{Warcraft Shortest Path} 
For this task, we follow the experimental setting set forth by 
\cite{Pogancic2020}, where our training set consists of $10,000$
terrain maps curated using Warcraft II tileset.
Each map encodes a $12 \times 12$ grid superimposed on a Warcraft terrain map,
where each vertex is weighted according to the cost of the tile, which in turn 
depends on type of terrain it represents \eg earth has lower cost than water.
These costs are \emph{not} presented to the network.
The task is then to generate a minimum-cost path from the upper left to the lower
right vertices, where the cost of a path is defined as the sum of costs of the vertices
visted by the edges along the path, and the minimum-cost path is not unique, \ie there
are many correct paths with the minimum cost.
The minimum cost path from the top left and bottom right vertices is encoded as
an indicator matrix.%

We use a CNN-LSTM model, where, presented with an image of a terrain map, we use a ResNet18
~\citep{He2016} to obtain a $128$ image embedding, which is then passed on to an LSTM with a
single layer, a hidden dim of size $512$, and at every time step predicts the next edge in 
the path conditioned on the image embedding and previous edges.
The constraint used in this task is that the predicted
edges form a valid simple path from the upper left vertex to the lower right vertex.

As has been established in previous work~\citep{xu2018semantic, Ahmed2022, Ahmed22nesyentropy},
the accuracy of predicting individual labels is often
a poor indicator of the performance of the neural network
in neuro-symbolic settings, where we are rather more interested in the accuracy of 
our predicted structured object \emph{exactly} matching the groundtruth label 
, e.g., \emph{is the prediction a shortest path?}, a metric which we denote ``Exact'' in our
experiments, as well as the accuracy of predicting objects that are \emph{consistent} 
with the constraint, e.g., \emph{is the prediction a valid path?}, a metric 
denoted ``Consistent''. Our results are shown in~\cref{tab:warcraft_results}.
We compare against two different baselines: the baseline model trained on the task, and
\emph{oversampling}, whereby we draw many samples conditioned on the input image,
and then resample this distribution, a variant of our approach where the proposal
distribution is the autoregressive model itself.
We find that our approach improves the exact match from $62.00\%$ and $69.10\%$ to $78.13\%$ while
guaranteeing the sampled sequences of edges constitute valid paths.%

\paragraph{Sudoku}
Next, we consider the task of predicting a solution to a given Sudoku puzzle.
Here the task is, given a $9 \times 9$ partially-filled grid of numbers
to fill in the remaining cells such that the entries each row, column, 
and $3 \times 3$ square are unique \ie each number from $1$ to $9$
appears exactly once. We use the dataset provided by \citet{Wang19}, consisting of 10K Sudoku puzzles,
split into 9K training examples, and 1K test samples, all puzzles having 10 missing entries.

Each LLM is prompted with the string ``Give me the solution of the following Sudoku without any extra text or quotes''
followed by the Sudoku Puzzle.
As baselines, we used Gemini 1.5 Flash and GPT-4o mini, and post-process the responses to remove
any extraneous text returned by the LLM.
We compare the baselines against Llama3-8B constrained using our approach \ours,
with the constraint that the elements of every row, column and $3 \times 3$ square are unique.
Our results are shown in \cref{tab:sudoku_results}.
We see that where as Gemini and GPT4o are able solve only $26\%$ and $45\%$ of the Sudoku
puzzles, our approach consistently manages to recover the correct Sudoku Puzzle solution.

\begin{table*}[t!]
    \centering
    \caption{
Evaluation of LLM toxicity and quality across different detoxification methods on Llama3-8b.
Model toxicity is evaluated on the \textsc{RealToxicityPrompts} benchmark through Perspective API. 
\textbf{Full}, \textbf{Toxic} and \textbf{Nontoxic} refer to the full, toxic and nontoxic subsets of the prompts, respectively.
\textbf{PPL} refers to the perplexity of Llama3-70B on the model generations using 5 different seeds.
In line with~\citet{Gehman2020RealToxicityPromptsEN, wang2022exploring}, we characterize
toxicity using two metrics: the \textbf{Expected Maximum Toxicity} over $5$ generations, and the 
\textbf{Toxicity Probability} of a completion at least once over 5 generations.
}
\label{tab:detoxify}
    {\hspace{-2mm}
\begin{tabular}{l|llc|clc|c}
    \toprule
\multicolumn{1}{c|}{\multirow{2}{*}{\textbf{Models}}}  &  \multicolumn{3}{c|}{\textbf{Exp. Max. Toxicity} ~($\downarrow$)} & 
\multicolumn{3}{c|}{\textbf{Toxicity Prob.} ~($\downarrow$)} & \multirow{2}{*}{\textbf{PPL} ~($\downarrow$)} \\
& \textbf{Full} & \textbf{Toxic} & \textbf{Nontoxic}& \textbf{Full} & \textbf{Toxic} & \textbf{Nontoxic}\\
\midrule
 \textsc{Llama3-8B} & $0.25$ & $0.43 $ & $0.20$ & $14.26\%$ & $38.30\%$ & $~~7.60\%$ &  $14.45$\\
\midrule %
 + Word Banning & $0.24$ & $0.40$ & $\mathbf{0.19}$ & $12.47\%$ & $32.43\%$ & $~~6.93\%$ & $\mathbf{14.50}$\\
 + \ours \emph{(ours)} & $\mathbf{0.23}$  & $\mathbf{0.37}$& $\mathbf{0.19}$  & $\mathbf{11.04\%}$ & $\mathbf{28.00\%}$ & $\mathbf{~~6.35\%}$ & $\mathbf{14.50}$\\
\bottomrule
\end{tabular}}
\vspace{-1mm}
\end{table*}

\paragraph{LLM detoxification} Lastly, we consider the task of LLM detoxification.
That is, we investigate the effectiveness of logical constraints, enforced using
\ours, at steering the model away from toxic prompted-generations.
We choose a \emph{very} simple constraint to be enforced by \ours
throughout this task, namely we ban any of a list of ``bad words'', including profanity,
slurs, and swear words\footnote{List downloaded from \href{https://github.com/LDNOOBW/List-of-Dirty-Naughty-Obscene-and-Otherwise-Bad-Words}{here}.}
from appearing as part of the model's generations.
Similar to previous work~\citep{Gehman2020RealToxicityPromptsEN, wang2022exploring},
we evaluate on the \textsc{RealToxicityPrompts}, a dataset of almost 100k prompts
ranging from nontoxic, assigned a toxicity score of $0$, to very toxic, assigned
a toxicity score of $1$.
We focus on \textsc{Llama3-8B}~\citep{Radford2019} as a base model for detoxification.
As is customary, ~\citep{Gehman2020RealToxicityPromptsEN, wang2022exploring}, we use
Perspective API, an online automated model for toxic language and hate speech detection,
to score the toxicity of our predictions.
It returns scores in the range $0$ to $1.0$, corresponding to nontoxic on the one end,
and extremely toxic on the other.
The toxicity score can be interpreted as the likelihood of given person perceiving the text as being toxic or offensive. 
Though not without limitations, studies~\citep{wang2022exploring,Welbl2021ChallengesID}
have shown that the toxicity scores reported by Perspective API are very strongly correlated with human~evaluations.

We compare \textsc{Llama3-8B} against \textsc{Word Banning}, which for this simple constraint
functions very similarly to Outlines~\citep{outlines} and Guidance~\citep{guidance}.
It keeps track of the words generated so far, and prevents the banned expression from
appearing by setting its probability to $0$. 
Word Banning could therefore be seen of as a greedy approximation of what we might hope to achieve
using \ours: intuitively, it might not be able to recover from generating a sentence
associated with a toxic intent as a result of making greedy decision at every step of the generaton.
We report the \emph{Expected Maximum Toxicity} and the \emph{Toxicity Probability}. 
The \emph{Expected Maximum Toxicity} measures the worst-case toxicity by
calculating the maximum toxicity over $25$ generations under the same prompt
with different random seeds, and averaging the maximum toxicity over all prompts.
This metric can be seen as a measure of how intensely offensive a generation is.
\emph{Toxicity Probability} estimates the empirical probability of generating
toxic language by evaluating the fraction of times a toxic continuation
is generated at least once over $25$ generations with different random seeds
for all prompts.
This metric can be seen as a measure of how likely the LLM is to be offensive.
Both of the above metrics are computed for the full set of prompts, only the toxic
subset of the prompts, and only the nontoxic subset of the prompts.
To understand the impact of detoxification, we evaluate the quality of the LLM
generations by measuring the perplexity of the generations according to \textsc{Llama3-70B}
averaged across $5$ different runs.
Our results are seen in~\cref{tab:detoxify}.

We can see that word banning lowers the toxicity of the generations produced by
\textsc{Llama3-8B} at a negligible decrease in perplexity.
More specifically, we observe that it reduces the average worst-case toxicity as
well as the probability of producing a toxic generation when prompted with nontoxic prompts
by a modest $1\%$, but when prompted with nontoxic prompts the reduction in toxicity is much
higher at $3\%$ and almost $6\%$ for the expected maximum toxicity and toxicity probability,
respectively.
Moving on to constraining our \textsc{Llama3-8B} with our approach \ours attains
the same perplexity as using word banning, but greatly reduces the toxicity of \textsc{Llama3-8B},
especially on toxic prompts where it results in almost twice as much the reduction in toxicity affected
by word banning, both in terms of the expected maximum toxcity as well as the toxicity probability.
\section{Conclusion and Future Work} 
We proposed a new approach for constraining LLMs which we called \ours.
Given a logical constraint, \ours guarantees that an autoregressive model's
generations satisfy the constraint.
\ours is probabilistically sound, while providing a concise constraint
specification language as well as an exponentially more succinct compilation form
compared to current defacto approaches to controlled generation.
We have showed that \ours can be applied to autoregressive models such
as LSTMs as well as large scale LLMs, outperforming the baselines on
the tasks of minimum-cost paths prediction, language as well as solving
Sudoku puzzles, where we achieved a perfect accuracy, beating both GPT-4o
and Gemini 1.5 by a large margin.
We have also shown that by virtue of its probabilistic nature, \ours is
able to outperform current approaches for constrained generation on LLM detoxification using only a list of toxic expressions as a constraint, relying on
probabilistic inference.

Moving forward, we envision extending and building upon \ours in a multitude of ways.
First, on the systems side since \ours extensively uses the language of tractable circuits,
we plan on a tight integration with circuits libraries, such as pyjuice~\citep{pyjuice} to
greatly improve the efficiency of our approach, as well as open the door for many potentially
interesting probabilistic queries.
Second, we hope to explore more applications of constrained generation, with one alluring possibility
being the use of constraints to improve the factual accuracy of current LLMs.

\bibliography{iclr2025_conference}

\begin{thebibliography}{56}
\providecommand{\natexlab}[1]{#1}
\providecommand{\url}[1]{\texttt{#1}}
\expandafter\ifx\csname urlstyle\endcsname\relax
  \providecommand{\doi}[1]{doi: #1}\else
  \providecommand{\doi}{doi: \begingroup \urlstyle{rm}\Url}\fi

\bibitem[Ahmed et~al.(2022{\natexlab{a}})Ahmed, Li, Ton, Guo, Chang, Kordjamshidi, Srikumar, Van~den Broeck, and Singh]{Ahmed22pylon}
Kareem Ahmed, Tao Li, Thy Ton, Quan Guo, Kai-Wei Chang, Parisa Kordjamshidi, Vivek Srikumar, Guy Van~den Broeck, and Sameer Singh.
\newblock Pylon: A pytorch framework for learning with constraints.
\newblock In \emph{Proceedings of the 36th AAAI Conference on Artificial Intelligence (Demo Track)}, feb 2022{\natexlab{a}}.

\bibitem[Ahmed et~al.(2022{\natexlab{b}})Ahmed, Teso, Chang, Broeck, and Vergari]{Ahmed2022}
Kareem Ahmed, Stefano Teso, Kai-Wei Chang, Guy Van~den Broeck, and Antonio Vergari.
\newblock Semantic probabilistic layers for neuro-symbolic learning.
\newblock In \emph{NeurIPS}, 2022{\natexlab{b}}.

\bibitem[Ahmed et~al.(2022{\natexlab{c}})Ahmed, Wang, Chang, and den Broeck]{Ahmed22nesyentropy}
Kareem Ahmed, Eric Wang, Kai-Wei Chang, and Guy~Van den Broeck.
\newblock Neuro-symbolic entropy regularization.
\newblock In \emph{The 38th Conference on Uncertainty in Artificial Intelligence}, 2022{\natexlab{c}}.

\bibitem[Ahmed et~al.(2023{\natexlab{a}})Ahmed, Chang, and Van~den Broeck]{ahmed2023pseudosl}
Kareem Ahmed, Kai-Wei Chang, and Guy Van~den Broeck.
\newblock A pseudo-semantic loss for deep generative models with logical constraints.
\newblock In \emph{NeurIPS}, 2023{\natexlab{a}}.

\bibitem[Ahmed et~al.(2023{\natexlab{b}})Ahmed, Zeng, Niepert, and Van~den Broeck]{AhmedICLR23}
Kareem Ahmed, Zhe Zeng, Mathias Niepert, and Guy Van~den Broeck.
\newblock Simple: A gradient estimator for k-subset sampling.
\newblock In \emph{Proceedings of the International Conference on Learning Representations (ICLR)}, may 2023{\natexlab{b}}.

\bibitem[Besag(1975)]{Besag1975}
Julian Besag.
\newblock Statistical analysis of non-lattice data.
\newblock \emph{Journal of the Royal Statistical Society. Series D (The Statistician)}, pp.\  pp. 179--195, 1975.

\bibitem[Bova(2016)]{SDDexpsucc}
Simone Bova.
\newblock Sdds are exponentially more succinct than obdds.
\newblock In \emph{AAAI Conference on Artificial Intelligence}, 2016.

\bibitem[Bryant(1992)]{Bryant1992OBDDs}
Randal~E. Bryant.
\newblock Symbolic boolean manipulation with ordered binary-decision diagrams.
\newblock \emph{ACM Computing Surveys}, 1992.

\bibitem[Choi et~al.(2020)Choi, Vergari, and Van~den Broeck]{choi2020pc}
YooJung Choi, Antonio Vergari, and Guy Van~den Broeck.
\newblock Probabilistic circuits: A unifying framework for tractable probabilistic modeling.
\newblock 2020.

\bibitem[Darwiche(2011)]{darwiche11}
Adnan Darwiche.
\newblock Sdd: A new canonical representation of propositional knowledge bases.
\newblock In \emph{IJCAI}, 2011.

\bibitem[Deutsch et~al.(2019)Deutsch, Upadhyay, and Roth]{deutsch2019general}
Daniel Deutsch, Shyam Upadhyay, and Dan Roth.
\newblock A general-purpose algorithm for constrained sequential inference.
\newblock In \emph{Proceedings of the 23rd Conference on Computational Natural Language Learning (CoNLL)}, 2019.

\bibitem[Geh et~al.(2024)Geh, Zhang, Ahmed, Wang, and den Broeck]{tokenization}
Renato Geh, Honghua Zhang, Kareem Ahmed, Benjie Wang, and Guy~Van den Broeck.
\newblock Where is the signal in tokenization space?
\newblock In \emph{Empirical Methods in Natural Language Processing 2024}, 2024.

\bibitem[Gehman et~al.(2020)Gehman, Gururangan, Sap, Choi, and Smith]{Gehman2020RealToxicityPromptsEN}
Samuel Gehman, Suchin Gururangan, Maarten Sap, Yejin Choi, and Noah~A. Smith.
\newblock Realtoxicityprompts: Evaluating neural toxic degeneration in language models.
\newblock \emph{ArXiv}, abs/2009.11462, 2020.

\bibitem[Gruber \& Holzer(2009)Gruber and Holzer]{gruber2009}
Hermann Gruber and Markus Holzer.
\newblock Tight bounds on the descriptional complexity of regular expressions.
\newblock In \emph{Proceedings of the 13th International Conference on Developments in Language Theory}, DLT '09, pp.\  276–287, Berlin, Heidelberg, 2009. Springer-Verlag.
\newblock ISBN 9783642027369.
\newblock \doi{10.1007/978-3-642-02737-6_22}.

\bibitem[Gruber \& Holzer(2014)Gruber and Holzer]{gruber2014}
Hermann Gruber and Markus Holzer.
\newblock From finite automata to regular expressions and back--a summary on descriptional complexity.
\newblock \emph{Electronic Proceedings in Theoretical Computer Science}, 151, 05 2014.
\newblock \doi{10.4204/EPTCS.151.2}.

\bibitem[Gugger et~al.(2022)Gugger, Debut, Wolf, Schmid, Mueller, and Mangrulkar]{accelerate}
Sylvain Gugger, Lysandre Debut, Thomas Wolf, Philipp Schmid, Zachary Mueller, and Sourab Mangrulkar.
\newblock Accelerate: Training and inference at scale made simple, efficient and adaptable., 2022.

\bibitem[He et~al.(2016)He, Zhang, Ren, and Sun]{He2016}
Kaiming He, Xiangyu Zhang, Shaoqing Ren, and Jian Sun.
\newblock Deep residual learning for image recognition.
\newblock In \emph{CVPR}, June 2016.

\bibitem[Hu et~al.(2023)Hu, Jain, Elmoznino, Kaddar, Lajoie, Bengio, and Malkin]{hu2023amortizing}
Edward~J Hu, Moksh Jain, Eric Elmoznino, Younesse Kaddar, Guillaume Lajoie, Yoshua Bengio, and Nikolay Malkin.
\newblock Amortizing intractable inference in large language models.
\newblock In \emph{The Twelfth International Conference on Learning Representations}, 2023.

\bibitem[Koller \& Friedman(2009)Koller and Friedman]{koller2009probabilistic}
Daphne Koller and Nir Friedman.
\newblock \emph{Probabilistic graphical models: principles and techniques}.
\newblock MIT press, 2009.

\bibitem[Koo et~al.(2024)Koo, Liu, and He]{koo2024}
Terry Koo, Frederick Liu, and Luheng He.
\newblock Automata-based constraints for language model decoding, 2024.

\bibitem[Krause et~al.(2021)Krause, Gotmare, McCann, Keskar, Joty, Socher, and Rajani]{krause2021gedi}
Ben Krause, Akhilesh~Deepak Gotmare, Bryan McCann, Nitish~Shirish Keskar, Shafiq Joty, Richard Socher, and Nazneen~Fatema Rajani.
\newblock Gedi: Generative discriminator guided sequence generation.
\newblock In \emph{Findings of the Association for Computational Linguistics: EMNLP 2021}, pp.\  4929--4952, 2021.

\bibitem[Lew et~al.(2023)Lew, Zhi-Xuan, Grand, and Mansinghka]{lew2023sequential}
Alexander~K Lew, Tan Zhi-Xuan, Gabriel Grand, and Vikash~K Mansinghka.
\newblock Sequential monte carlo steering of large language models using probabilistic programs.
\newblock \emph{arXiv preprint arXiv:2306.03081}, 2023.

\bibitem[Liu et~al.(2024{\natexlab{a}})Liu, Ahmed, and Van~den Broeck]{pyjuice}
Anji Liu, Kareem Ahmed, and Guy Van~den Broeck.
\newblock Scaling tractable probabilistic circuits: A systems perspective.
\newblock In \emph{International Conference on Machine Learning (ICML), July 2024}, 2024{\natexlab{a}}.

\bibitem[Liu et~al.(2024{\natexlab{b}})Liu, Liu, Fiannaca, Koo, Dixon, Terry, and Cai]{Liu2024Structured}
Michael~Xieyang Liu, Frederick Liu, Alex Fiannaca, Terry Koo, Lucas Dixon, Michael Terry, and Carrie Cai.
\newblock ``we need structured output''': Towards user-centered constraints on large language model output.
\newblock pp.\ ~9, 2024{\natexlab{b}}.

\bibitem[Lu et~al.(2021)Lu, West, Zellers, Le~Bras, Bhagavatula, and Choi]{lu2021neurologic}
Ximing Lu, Peter West, Rowan Zellers, Ronan Le~Bras, Chandra Bhagavatula, and Yejin Choi.
\newblock Neurologic decoding:(un) supervised neural text generation with predicate logic constraints.
\newblock In \emph{Proceedings of the 2021 Conference of the North American Chapter of the Association for Computational Linguistics: Human Language Technologies (NAACL)}, 2021.

\bibitem[Lu et~al.(2022)Lu, Welleck, West, Jiang, Kasai, Khashabi, Le~Bras, Qin, Yu, Zellers, et~al.]{lu2022neurologic}
Ximing Lu, Sean Welleck, Peter West, Liwei Jiang, Jungo Kasai, Daniel Khashabi, Ronan Le~Bras, Lianhui Qin, Youngjae Yu, Rowan Zellers, et~al.
\newblock Neurologic a* esque decoding: Constrained text generation with lookahead heuristics.
\newblock In \emph{Proceedings of the 2022 Conference of the North American Chapter of the Association for Computational Linguistics: Human Language Technologies}, pp.\  780--799, 2022.

\bibitem[Lundberg et~al.(2024)Lundberg, Ribeiro, Edgar, and Harsha-Nori]{guidance}
Scott Lundberg, Marco Ribeiro, Richard Edgar, and Harsha-Nori.
\newblock Guidance: a guidance language for controlling large language models., 2024.

\bibitem[Mamouras \& Chattopadhyay(2024)Mamouras and Chattopadhyay]{Regexlookaround}
Konstantinos Mamouras and Agnishom Chattopadhyay.
\newblock determining the validity of a given isbn matching of regular expressions with lookaround assertions.
\newblock \emph{Proc. ACM Program. Lang.}, 2024.

\bibitem[Meert(2017)]{pysdd}
Wannes Meert.
\newblock Pysdd.
\newblock In \emph{Recent Trends in Knowledge Compilation}, sep 2017.

\bibitem[Meng et~al.(2022)Meng, Lu, Peng, and Chang]{meng2022nado}
Tao Meng, Sidi Lu, Nanyun Peng, and Kai-Wei Chang.
\newblock Controllable text generation with neurally-decomposed oracle.
\newblock In \emph{Advances in Neural Information Processing Systems 35 (NeurIPS)}, 2022.

\bibitem[Mittal et~al.(2024)Mittal, Kartik, Mausam, and Singla]{mittal2024puzzlebench}
Chinmay Mittal, Krishna Kartik, Mausam, and Parag Singla.
\newblock Puzzlebench: Can llms solve challenging first-order combinatorial reasoning problems?, 2024.

\bibitem[{OpenAI}(2023)]{openai2023structured}
{OpenAI}.
\newblock Introducing structured outputs in the api.
\newblock \url{https://openai.com/index/introducing-structured-outputs-in-the-api/}, June 2023.
\newblock Accessed: 2023-09-24.

\bibitem[Pan et~al.(2023)Pan, Albalak, Wang, and Wang]{pan2023logiclm}
Liangming Pan, Alon Albalak, Xinyi Wang, and William Wang.
\newblock Logic-{LM}: Empowering large language models with symbolic solvers for faithful logical reasoning.
\newblock In \emph{Findings of the Association for Computational Linguistics: EMNLP 2023}, 2023.

\bibitem[Park et~al.(2024)Park, Wang, Berg-Kirkpatrick, Polikarpova, and D'Antoni]{grammaraligneddecoding}
Kanghee Park, Jiayu Wang, Taylor Berg-Kirkpatrick, Nadia Polikarpova, and Loris D'Antoni.
\newblock Grammar-aligned decoding.
\newblock In \emph{NeurIPS}, 2024.

\bibitem[Paszke et~al.(2019)Paszke, Gross, Massa, Lerer, Bradbury, Chanan, Killeen, Lin, Gimelshein, Antiga, Desmaison, Kopf, Yang, DeVito, Raison, Tejani, Chilamkurthy, Steiner, Fang, Bai, and Chintala]{pazske2019}
Adam Paszke, Sam Gross, Francisco Massa, Adam Lerer, James Bradbury, Gregory Chanan, Trevor Killeen, Zeming Lin, Natalia Gimelshein, Luca Antiga, Alban Desmaison, Andreas Kopf, Edward Yang, Zachary DeVito, Martin Raison, Alykhan Tejani, Sasank Chilamkurthy, Benoit Steiner, Lu~Fang, Junjie Bai, and Soumith Chintala.
\newblock Pytorch: An imperative style, high-performance deep learning library.
\newblock In H.~Wallach, H.~Larochelle, A.~Beygelzimer, F.~d\textquotesingle Alch\'{e}-Buc, E.~Fox, and R.~Garnett (eds.), \emph{Advances in Neural Information Processing Systems}, volume~32. Curran Associates, Inc., 2019.

\bibitem[Peharz et~al.(2020)Peharz, Lang, Vergari, Stelzner, Molina, Trapp, Broeck, Kersting, and Ghahramani]{peharz2020einsum}
Robert Peharz, Steven Lang, Antonio Vergari, Karl Stelzner, Alejandro Molina, Martin Trapp, Guy Van~den Broeck, Kristian Kersting, and Zoubin Ghahramani.
\newblock Einsum networks: Fast and scalable learning of tractable probabilistic circuits.
\newblock In \emph{International Conference of Machine Learning}, 2020.

\bibitem[Pogančić et~al.(2020)Pogančić, Paulus, Musil, Martius, and Rolinek]{Pogancic2020}
Marin~Vlastelica Pogančić, Anselm Paulus, Vit Musil, Georg Martius, and Michal Rolinek.
\newblock Differentiation of blackbox combinatorial solvers.
\newblock In \emph{ICLR}, 2020.

\bibitem[Qin et~al.(2022)Qin, Welleck, Khashabi, and Choi]{qin2022cold}
Lianhui Qin, Sean Welleck, Daniel Khashabi, and Yejin Choi.
\newblock Cold decoding: Energy-based constrained text generation with langevin dynamics.
\newblock \emph{Advances in Neural Information Processing Systems}, 35:\penalty0 9538--9551, 2022.

\bibitem[Radford et~al.(2019)Radford, Wu, Child, Luan, Amodei, and Sutskever]{Radford2019}
Alec Radford, Jeff Wu, Rewon Child, David Luan, Dario Amodei, and Ilya Sutskever.
\newblock Language models are unsupervised multitask learners.
\newblock 2019.

\bibitem[Roth(1993)]{Roth93}
Dan Roth.
\newblock On the hardness of approximate reasoning.
\newblock In \emph{{IJCAI}}, pp.\  613--619. Morgan Kaufmann, 1993.

\bibitem[Scholak et~al.(2021)Scholak, Schucher, and Bahdanau]{picard}
Torsten Scholak, Nathan Schucher, and Dzmitry Bahdanau.
\newblock {PICARD}: Parsing incrementally for constrained auto-regressive decoding from language models.
\newblock In \emph{Proceedings of the 2021 Conference on Empirical Methods in Natural Language Processing}, 2021.

\bibitem[Shih et~al.(2023)Shih, Sadigh, and Ermon]{shih2023longhorizon}
Andy Shih, Dorsa Sadigh, and Stefano Ermon.
\newblock Long horizon temperature scaling.
\newblock In \emph{Proceedings of the 40th International Conference on Machine Learning (ICML)}, 2023.

\bibitem[Sun et~al.(2023)Sun, Tian, Zhou, Xu, Hu, Gupta, Wieting, Peng, and Ma]{sun2023evaluatingLLMs}
Jiao Sun, Yufei Tian, Wangchunshu Zhou, Nan Xu, Qian Hu, Rahul Gupta, John Wieting, Nanyun Peng, and Xuezhe Ma.
\newblock Evaluating large language models on controlled generation tasks.
\newblock In \emph{Proceedings of the 2023 Conference on Empirical Methods in Natural Language Processing}, 2023.

\bibitem[Vergari et~al.(2015)Vergari, Di~Mauro, and Esposito]{vergari2015simplifying}
Antonio Vergari, Nicola Di~Mauro, and Floriana Esposito.
\newblock Simplifying, regularizing and strengthening sum-product network structure learning.
\newblock In \emph{Joint European Conference on Machine Learning and Knowledge Discovery in Databases}, pp.\  343--358. Springer, 2015.

\bibitem[Vergari et~al.(2021)Vergari, Choi, Liu, Teso, and Van~den Broeck]{vergari2021compositional}
Antonio Vergari, YooJung Choi, Anji Liu, Stefano Teso, and Guy Van~den Broeck.
\newblock A compositional atlas of tractable circuit operations for probabilistic inference.
\newblock \emph{Advances in Neural Information Processing Systems}, 34, 2021.

\bibitem[Wang et~al.(2023)Wang, Wang, Wang, Cao, Saurous, and Kim]{wang2023grammar}
Bailin Wang, Zi~Wang, Xuezhi Wang, Yuan Cao, Rif~A. Saurous, and Yoon Kim.
\newblock Grammar prompting for domain-specific language generation with large language models.
\newblock In \emph{Thirty-seventh Conference on Neural Information Processing Systems}, 2023.

\bibitem[Wang et~al.(2022)Wang, Ping, Xiao, Xu, Patwary, Shoeybi, Li, Anandkumar, and Catanzaro]{wang2022exploring}
Boxin Wang, Wei Ping, Chaowei Xiao, Peng Xu, Mostofa Patwary, Mohammad Shoeybi, Bo~Li, Anima Anandkumar, and Bryan Catanzaro.
\newblock Exploring the limits of domain-adaptive training for detoxifying large-scale language models.
\newblock In \emph{Neurips}, 2022.

\bibitem[Wang et~al.(2019)Wang, Donti, Wilder, and Kolter]{Wang19}
Po{-}Wei Wang, Priya~L. Donti, Bryan Wilder, and J.~Zico Kolter.
\newblock Satnet: Bridging deep learning and logical reasoning using a differentiable satisfiability solver.
\newblock In \emph{{ICML}}, volume~97 of \emph{Proceedings of Machine Learning Research}, pp.\  6545--6554. {PMLR}, 2019.

\bibitem[Welbl et~al.(2021)Welbl, Glaese, Uesato, Dathathri, Mellor, Hendricks, Anderson, Kohli, Coppin, and Huang]{Welbl2021ChallengesID}
Johannes Welbl, Amelia Glaese, Jonathan Uesato, Sumanth Dathathri, John F.~J. Mellor, Lisa~Anne Hendricks, Kirsty Anderson, Pushmeet Kohli, Ben Coppin, and Po-Sen Huang.
\newblock Challenges in detoxifying language models.
\newblock \emph{ArXiv}, abs/2109.07445, 2021.

\bibitem[Willard \& Louf(2023)Willard and Louf]{outlines}
Brandon~T. Willard and R{\'e}mi Louf.
\newblock Efficient guided generation for large language models.
\newblock \emph{ArXiv}, abs/2307.09702, 2023.

\bibitem[Xu et~al.(2018)Xu, Zhang, Friedman, Liang, and Van~den Broeck]{xu2018semantic}
Jingyi Xu, Zilu Zhang, Tal Friedman, Yitao Liang, and Guy Van~den Broeck.
\newblock A semantic loss function for deep learning with symbolic knowledge.
\newblock In \emph{International conference on machine learning}, pp.\  5502--5511. PMLR, 2018.

\bibitem[Xue et~al.(2012)Xue, Choi, and Darwiche]{Xue2012SDDs}
Yexiang Xue, Arthur Choi, and Adnan Darwiche.
\newblock Basing decisions on sentences in decision diagrams.
\newblock \emph{Proceedings of the AAAI Conference on Artificial Intelligence}, 2012.

\bibitem[Yang \& Klein(2021)Yang and Klein]{yang2021fudge}
Kevin Yang and Dan Klein.
\newblock Fudge: Controlled text generation with future discriminators.
\newblock In \emph{Proceedings of the 2021 Conference of the North American Chapter of the Association for Computational Linguistics: Human Language Technologies (NAACL)}, 2021.

\bibitem[Zhang et~al.(2023)Zhang, Dang, Peng, and Van~den Broeck]{GeLaTo}
Honghua Zhang, Meihua Dang, Nanyun Peng, and Guy Van~den Broeck.
\newblock Tractable control for autoregressive language generation.
\newblock In \emph{Proceedings of the 40th International Conference on Machine Learning (ICML)}, jul 2023.

\bibitem[Zhang et~al.(2024)Zhang, Kung, , Yoshida, Peng, and Van~den Broeck]{CtrlG}
Honghua Zhang, Po-Nien Kung, , Masahiro Yoshida, Nanyun Peng, and Guy Van~den Broeck.
\newblock Adaptable logical control for large language models.
\newblock In \emph{NeurIPS}, 2024.

\bibitem[Zheng et~al.(2024)Zheng, Yin, Xie, Sun, Huang, Yu, Cao, Kozyrakis, Stoica, Gonzalez, Barrett, and Sheng]{zheng2024sglang}
Lianmin Zheng, Liangsheng Yin, Zhiqiang Xie, Chuyue Sun, Jeff Huang, Cody~Hao Yu, Shiyi Cao, Christos Kozyrakis, Ion Stoica, Joseph~E. Gonzalez, Clark Barrett, and Ying Sheng.
\newblock Sglang: Efficient execution of structured language model programs.
\newblock \emph{NeurIPS 2024}, 2024.

\end{thebibliography}
\bibliographystyle{iclr2025_conference}

\newpage
\appendix

\section{Proofs}
\mainthm*
\begin{proof} 

Let $\y = (\y_1, \ldots, \y_n)$ denote a string.
Let vocabulary $\vocab$ denote a set of subwords, or \emph{tokens}.
A \emph{tokenization} of a string $\y$ \wrt a vocabulary $\vocab$ is a sequence of tokens $\tokens = (\token_1, \ldots, \token_n)$ , such
that each $\token_i \in \vocab$ and the concatenation of all tokens $\token_i$ constitutes the string $\y$, \ie $\token_1 \circ \dots \circ \token_n = \y $.

An autoregressive distribution $\p(\cdot)$ defines a conditional probability distribution over tokens,
and therefore, \emph{induces a distribution over tokenizations of a given string $\y$}~\citep{tokenization}
    \begin{equation*}
        \llmdist(\tokens,\y) = 
        \begin{cases}
             \prod_{i=1}^{|\tokens|} \p\left(\token_i | \token_1,\dots, \token_{i-1}\right) \text{~if~} \tokens \models \y,\\
             0 \quad \text{otherwise,}
        \end{cases}
    \end{equation*}
    where $\tokens \models \y$ denotes that $\tokens$ is a tokenization of $\y$.
    
    We start by showing that, if the canonical tokenization of a string $\y$ does not satisfy $\alpha$,
    then \cref{algo:conditional} precludes any non-canonical tokenizations of the string $\y$.
    To see that, observe that a non-canonical tokenization of a sentence $\y$ generated by \cref{algo:conditional}
    is subsequently cast into a canonical tokenization by invoking the $\mathtt{canonize}$ function.
    It is then assigned an importance weight of $0$, and assigned zero probability in the resampling distribution
    on line~\ref{line:return} in \cref{algo:conditional}.

    During the resampling step, the set of candidate samples are only those sampled from $\p_{\y}(\cdot \mid \alpha)$

    We will now show that we can always construct $\p_{\y}(\cdot \mid \alpha)$ given a constraint $\alpha$ defined over
    variables $\{Y_{11}, \ldots, Y_{nk}\}$ and a contextualized psuedolikelihood distribution $\Tilde{p}_{\y}(\cdot)$.
    Recall from~\cref{sec:pseudolikelihood} that the contextualized pseudolikelihood distribution is fully-factorized.
    We can therefore construct a deterministic and decomposable circuit representing the contextualized pseudolikelihood
    distribution where with a slight abuse of notation, we use $\Tilde{p}_{\y}(\cdot)$ to refer to both the distribution
    as well as its circuit representation.
    Such a construction proceeds by introducing a Bernoulli distribution unit for each variable, and connecting the Bernoulli
    distribution units using a singular product node.

    Given a constraint $\alpha$, we can compile it into a constraint circuit $c_\alpha$.
    The simplest construction is as follows.
    Order the variables lexicographically.
    Alternate OR and AND nodes.
    An OR node branches on the current variable being true or false, and has two children: a left (right) AND node whose children
    are the positive (negative) literal and the subtree corresponding to substituting the positive (negative) literal into the constraint.
    Repeat the above process till exhausting all the variables.

    Given a constraint circuit $c_\alpha$, a deterministic and omnicompatible circuit~\citep{vergari2021compositional} $\Tilde{p}_{\y}(\cdot)$, we can compute the product of the two circuits $\Tilde{p}_{\y}(\cdot, \alpha)$ in time $\mathcal{O}(|p||c|)$, following
    from Theorem 3.2 in~\cite{vergari2021compositional}. Furthermore, the resulting product circuit $\Tilde{p}_{\y}(\cdot, \alpha)$ can be normalized in time $\mathcal{O}(|\Tilde{p}|)$ to obtain $\Tilde{p}_{\y}(\cdot \mid \alpha)$, following from Proposition 2.1 in~\cite{vergari2021compositional}

\end{proof}
\section{Language Detoxification}
The experiments were run on a server with an AMD EPYC 7313P 16-Core Processor @ 3.7GHz, 3 NVIDIA RTX A6000, and 252 GB RAM.
Our LLM detoxification experiments utilized both GPUs using the Huggingface Accelerate~\citep{accelerate} library.

In order to construct our constraint, we start with the list of bad words\footnote{List downloaded from \href{https://github.com/LDNOOBW/List-of-Dirty-Naughty-Obscene-and-Otherwise-Bad-Words}{here}.} and their
space-prefixed variants\footnote{A word will be encoded differently whether it is space-prefixed or not.}.
We then tokenize this list of augment bad words, yielding $871$ unique possibly-bad tokens (some tokens are only bad when considered in context with other tokens), in addition to an extra catch-all good token to which remaining tokens map to.
Our constraint then disallows all sentences containing any of the words on the augmented list, starting at any of the sentence
locations $0$ through \text{len(sentence) - len(word)}.
The code to process the list of words, the code to create the constraint as well as the constraint itself will be 
made publicly available upon paper acceptance. %

We use a batch size of $10$ during generation, and only sample every sentence $5$ times.
The model sentence $\y$ was generated using nucleus sampling with $p=0.9$ and a temperature of $1$.
We experimented with tempering the contextualized pseudo-likelihood distribution on a random set of prompts of size $1000$
using $\tau = \{0.1, 0.3, 0.5, 0.7, 0.9, 1.0\}$.
Our final results are reported on a random subset of the RealToxicityPrompts dataset of size $10$k, average over $5$ different
runs using $5$ different seeds.
For only this task, our implementation makes use of top-$k$ to construct the pseudo-likelihood
distribution (lines $7$-$12$ in \cref{algo:psl}) due to the lack of computational resources.
Generations from all methods were limited to a maximum of $20$ new tokens.

\section{Sudoku}
The experiments were run on a server with an AMD EPYC 7313P 16-Core Processor @ 3.7GHz, 2 NVIDIA RTX A6000, and 252 GB RAM.
We use the dataset provided by \citet{Wang19}, consisting of 10K Sudoku puzzles,
split into 9K training examples, and 1K test samples, all puzzles having 10 missing entries.
Our constraint disallows any solution in which the rows, columns and square are not unique.

\section{Warcraft Shortest Path}
The experiments were run on a server with an AMD EPYC 7313P 16-Core Processor @ 3.7GHz, 2 NVIDIA RTX A6000, and 252 GB RAM.
We used the best model trained by~\citep{ahmed2023pseudosl}.
The model uses a CNN-LSTM model for this task.
Precisely, a ResNet-18 encodes the map to an embedding of dimension 128.
An LSTM with $1$ layer, and a hidden size of $512$ then predicts the next edge in the shortest path
conditioned on the input map and all previous edges.
The constraint disallows any prediction not a valid path connecting the upper left and lower right vertices.

\section{Broader Impact}
The work presented in this paper has a significant potential for positive
societal impact.
Neuro-symbolic AI moves us closer to models whose behavior is trustworthy, explainable and fair.
This extends to critical domains such as autonomous driving, medical diagnosis and financial planning to name a few.
Large language models have recently seen an exponential increase in popularity, crossing the threshold of being mere
research tools into products that are utilized by the general public.
Unfortunately, the same expressivity that renders these models so powerful also makes it hard to reason about their
behavior.
We believe our proposed approach is a step in right direction: it expands the class of logical constraints we can
tackle while account for and acknowledging the underlying probability distribution.
And we have shown the merits of our approach when applied to LLM detoxification.
We must, however, also be cognizant of the potential negative societal impacts:
In very much the same way that our approach can be used to output non-toxic generations, it can be used to output
toxic and harmful generations.

\end{document}